\definecolor{mydarkblue}{rgb}{0,0.08,0.45}
\newtheorem{theorem}{Theorem}
\newtheorem{lemma}[theorem]{Lemma}
\newtheorem{proposition}[theorem]{Proposition}
\newtheorem{definition}[theorem]{Definition}
\def\1{\bm{1}}
\def\vzero{{\bm{0}}}
\def\vone{{\bm{1}}}
\def\vc{{\bm{c}}}
\def\ve{{\bm{e}}}
\def\vr{{\bm{r}}}
\def\vs{{\bm{s}}}
\def\vu{{\bm{u}}}
\def\vv{{\bm{v}}}
\def\vz{{\bm{z}}}
\def\mA{{\bm{A}}}
\def\mB{{\bm{B}}}
\def\mC{{\bm{C}}}
\def\mE{{\bm{E}}}
\def\mL{{\bm{L}}}
\def\mU{{\bm{U}}}
\def\mW{{\bm{W}}}
\def\mX{{\bm{X}}}
\DeclareMathAlphabet{\mathsfit}{\encodingdefault}{\sfdefault}{m}{sl}
\SetMathAlphabet{\mathsfit}{bold}{\encodingdefault}{\sfdefault}{bx}{n}
\def\sL{{\mathbb{L}}}
\def\sN{{\mathbb{N}}}
\def\sR{{\mathbb{R}}}
\def\sZ{{\mathbb{Z}}}
\newcommand{\R}{\mathbb{R}}
\title{Your Transformer May Not be as Powerful \\as You Expect}
\author{%
Shengjie Luo$^{1,5}$\thanks{Equal contribution. The order is decided by rolling the dice.}, \ Shanda Li$^{2*}$, \textbf{Shuxin Zheng}$^3$, \textbf{Tie-Yan Liu}$^3$, \ \textbf{Liwei Wang}$^{1,4}$\thanks{Correspondence to: Di He<\texttt{dihe@pku.edu.cn}>, Liwei Wang <\texttt{wanglw@pku.edu.cn}>.},  \textbf{Di He}$^{1\dagger}$\\
$^1$Key Laboratory of Machine Perception, MOE\\ 
School of Intelligence Science and Technology, Peking University \\
$^2$Machine Learning Department, School of Computer Science, Carnegie Mellon University\\
$^3$Microsoft Research\quad $^4$Center for Data Science, Peking University\quad $^5$Zhejiang Lab\\
\texttt{\footnotesize luosj@stu.pku.edu.cn, shandal@cs.cmu.edu,}\\
\texttt{\footnotesize \{shuz, tyliu\}@microsoft.com, \{wanglw,dihe\}@pku.edu.cn} \\
}
\begin{document}

\maketitle
\begin{abstract}
Relative Positional Encoding (RPE), which encodes the relative distance between any pair of tokens, is one of the most successful modifications to the original Transformer. As far as we know, theoretical understanding of the RPE-based Transformers is largely unexplored. In this work, we mathematically analyze the power of RPE-based Transformers regarding whether the model is capable of approximating any continuous sequence-to-sequence functions. One may naturally assume the answer is in the affirmative---RPE-based Transformers are universal function approximators. However, we present a negative result by showing there exist continuous sequence-to-sequence functions that RPE-based Transformers cannot approximate no matter how deep and wide the neural network is. One key reason lies in that most RPEs are placed in the softmax attention that always generates a right stochastic matrix. This restricts the network from capturing positional information in the RPEs and limits its capacity. To overcome the problem and make the model more powerful, we first present sufficient conditions for RPE-based Transformers to achieve universal function approximation. With the theoretical guidance, we develop a novel attention module, called Universal RPE-based (URPE) Attention, which satisfies the conditions. Therefore, the corresponding URPE-based Transformers become universal function approximators. Extensive experiments covering typical architectures and tasks demonstrate that our model is parameter-efficient and can achieve superior performance to strong baselines in a wide range of applications. The code will be made publicly available at \url{https://github.com/lsj2408/URPE}.
\end{abstract}

\section{Introduction}
Transformer~\cite{vaswani2017attention} is well acknowledged as a powerful neural network in modeling sequential data~\cite{devlin2018bert,liu2019roberta}. Relative Positional Encoding (RPE) is one of the most successful modifications to the Transformer model \cite{narang2021transformer}. Unlike the originally designed Absolute Positional Encoding (APE) that encodes each position as an embedding vector, RPE encodes the relative distance between any pair of tokens and is usually placed in the $\mathrm{softmax}$ exponentiation in the self-attention module. Empirically, many studies show that RPE-based Transformers can achieve impressive performance on various language tasks~\cite{raffel2019exploring, dai2019transformer} and have better extrapolation ability on longer sequences~\cite{press2022train}. Another point worth noting is that RPE makes Transformer easily be extended to other data modalities, such as image~\cite{dosovitskiy2021an,liu2021swin} and graph~\cite{ying2021transformers}, as the relative distance naturally preserves invariant properties for several important transformations like rotation and translation. 

In this paper, we first investigate the theoretical aspect of the RPE-based Transformers. In particular, we study their expressive power which describes models’ ability to approximate any continuous functions. Recently, Yun et al. \cite{yun2019transformers} proved that the APE-based Transformers are universal approximators of continuous sequence-to-sequence functions on a compact domain, and one may expect that the RPE-based Transformers enjoy the same property. However, we provide a surprising theoretical finding which shows that widely-used Transformers with RPE are \emph{not} universal function approximators, i.e., there exist continuous sequence-to-sequence functions that the models cannot approximate no matter how deep and wide the model is. One key observation is that the RPEs are placed inside the $\mathrm{softmax}$ in the attention module. The $\mathrm{softmax}$ operator always generates a right stochastic matrix, which fails to reflect enough positional information encoded in RPE to the output. Synthetic tasks are conducted to support this mathematical claim. 

To design a more powerful RPE-based Transformer, we delve into the limitation of the model and theoretically derive two sufficient conditions to achieve universal function approximation: the \emph{attentive condition} and \emph{position-aware condition}. Both conditions together state that the RPE-based attention function class should cover some special cases of the originally designed attention and break the right-stochastic-matrix limitation. With such theoretical guidance, we develop a new attention module called Universal RPE-based (URPE) Attention that satisfies the above conditions. Therefore, the Transformers with URPE-based Attention, called \emph{URPE-based Transformers}, are universal function approximators. We show our proposed architecture is easy to implement and parameter-efficient via extensive experiments covering typical model architectures~\cite{raffel2019exploring,dai2019transformer,ying2021transformers} and tasks (synthetic tasks, language modeling, and graph learning). Our model brings consistent performance gains compared with existing RPE-based Transformers on a wide range of tasks.
 
The paper is organized as follows. In Section \ref{sec:pre}, we introduce background on the Transformer architecture and positional encoding approaches. In Section \ref{sec:not-universal}, we prove that the widely-used RPE-based Transformers are \emph{not} universal function approximators. In Section \ref{sec:universal}, we further present sufficient conditions for RPE-based Transformers to achieve universal approximation, and develop a new attention module, URPE-based Attention, to build a universal RPE-based Transformer. Experiments are presented in Section \ref{sec:exp} to demonstrate the effectiveness of Transformers with our proposed URPE-based Attention. Related works and the conclusion are discussed in the last two sections. 

\section{Preliminary} 
\label{sec:pre}
The Transformer architecture is composed of stacked Transformer blocks \cite{vaswani2017attention, devlin2018bert}. A Transformer block is a sequence-to-sequence mapping from $\mathbb{R}^{n\times d}$ to $\mathbb{R}^{n\times d}$, where $n$ is the sequence length and $d$ is the dimension of token embedding. A Transformer block consists of two layers: a self-attention layer followed by a feed-forward layer, with both layers having normalization (e.g., LayerNorm~\cite{ba2016layer}, RMSNorm~\cite{zhang2019root}) and skip connections. For an input $\mX\in \mathbb{R}^{n\times d}$, the self-attention layer and feed-forward layer are defined as follows:
\begin{align}
    \label{eq:attn-mat}
    &\mA^h(\mX) = \mathrm{softmax}\left(\mX \mW_Q^h(\mX \mW_K^h)^{\top}\right);\\
    \label{eq:attn}
    &\mathrm{Attn}(\mX) = \mX+ \sum_{h=1}^H\mA^h(\mX) \mX \mW_V^h\mW_O^h;\\
    \label{eq:ffn}
    &\mathrm{FFN}(\mX) =\mX+ \mathrm{ReLU} (\mX\mW_1)\mW_2,
\end{align}

where $\mW_O^h \in \R^{d_H \times d}$, $\mW_Q^h, \mW_K^h, \mW_V^h \in \R^{d \times d_H}$, $\mW_1 \in \R^{d \times r}, \mW_2 \in \R^{r \times d}$. $H$ is the number of attention heads, $d_H$ is the dimension of each head, and $r$ is the dimension of the hidden layer.  
$\mA^h(\mX)$ is usually referred to as the \textit{attention matrix}. Given pre-defined $H$, $d_H$ and $r$, we refer to the function class of the Transformer blocks as $\mathrm{T\_blocks}(H,d_H,r)$.

\paragraph{Transformer with Absolute Positional Encoding.} Self-attention layers and feed-forward layers defined in Eq.(\ref{eq:attn}) and (\ref{eq:ffn}) are entirely invariant to sequence order. Therefore, purely stacked Transformer blocks cannot distinguish information at different positions. The original Transformer \cite{vaswani2017attention} proposes Absolute Positional Encoding (APE) to endow Transformer networks with the ability to capture positional information. In particular, a (learnable) real-valued embedding $e_i\in\mathbb{R}^{d}$ is assigned to each position $i$, leading to an Absolute Positional Encoding matrix $\mE=[e_1,\cdots,e_n]^{\top}$, which will be added to the input sequence. Formally speaking, the function class represented by APE-based Transformers is
\begin{equation*}
    \Omega_{\mathrm{APE}}^{H,d_H,r}=\{f(\mX)=g(\mX+\textcolor{violet}{\mE})|\mE\in \mathbb{R}^{n\times d}; g=g_L\circ\cdots\circ g_1; g_i\in\mathrm{T\_blocks}(H,d_H,r);L\in\sN^*\}.
\end{equation*}

APE essentially enhances the expressive power of Transformers. Yun et al.~\cite{yun2019transformers} proved the following theoretical result, which shows that APE-based Transformers can approximate any continuous sequence-to-sequence function in a compact domain.

\begin{theorem}[informal~\cite{yun2019transformers}]
\label{thm:ape-universal}
Given $n,d\in \mathbb{N}^{*}$, the function class of Transformers with APE, $\Omega_{\mathrm{APE}}^{2,1,4}$, is a universal approximator for continuous functions that map a compact domain in $\mathbb{R}^{n\times d}$ to $\mathbb{R}^{n\times d}$.
\end{theorem}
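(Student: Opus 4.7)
The plan is to follow the three-stage reduction strategy that is standard in universal-approximation proofs for sequence models: (i) reduce continuous target functions to piecewise-constant quantized targets, (ii) show that a Transformer can implement a \emph{contextual mapping} that assigns a unique identifier to every quantized input sequence together with its APE, and (iii) show that a deep feed-forward network, realized through stacked $\mathrm{FFN}$ layers, can implement the look-up from identifiers to target values. The crucial new ingredient over the permutation-equivariant analysis is that the additive APE matrix $\mE$ breaks the symmetry between positions, which is what makes the contextual mapping injective on distinct quantized sequences.

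First I would formalize the reduction. Fix the compact domain $\gD \subset \sR^{n\times d}$, a continuous target $f:\gD\to\sR^{n\times d}$, and a tolerance $\eps>0$. By uniform continuity, there is a resolution $\delta>0$ such that $f$ is $\eps$-close in the desired norm to a piecewise-constant function $\bar f$ defined on the grid $\gG_\delta = \{0,\delta,2\delta,\dots,1-\delta\}^{n\times d}$ (after rescaling $\gD$ into $[0,1]^{n\times d}$). It then suffices to construct a member of $\Omega_{\mathrm{APE}}^{2,1,4}$ that agrees with $\bar f$ on $\gG_\delta$ and is close to it on $\gD$ (the closeness on $\gD\setminus\gG_\delta$ follows by sandwiching via quantization layers). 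A standard composition of $\mathrm{FFN}$ layers using shifted ReLU ramps implements the quantization $\mX\mapsto \bar\mX \in \gG_\delta$, because ReLU networks can approximate indicator functions of intervals to arbitrary sharpness.

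The main step, and the primary obstacle, is the construction of the contextual mapping. Concretely, I would show that by stacking self-attention layers of the form in Eq.~(\ref{eq:attn}) with two heads of head-dimension one, and with the APE matrix $\mE$ chosen so that each row is a distinct scalar offset, one can realize a map $q:\gG_\delta \to \sR^{n\times d}$ such that the rows $q(\bar\mX)_i$ are all distinct across positions $i$ and across distinct quantized inputs $\bar\mX$. The trick, following the hard-attention intuition, is to scale the query/key weights so that $\mathrm{softmax}$ approaches $\mathrm{argmax}$, turning each head into a selective shift that either picks out a single token (identified by its positional offset) or leaves the row unchanged. Iterating such selective shifts over all positions, one can pack the entire sequence $(\bar\mX, \mE)$ into each row in a lexicographically injective way, so that the resulting map separates all $|\gG_\delta|$ inputs at every row. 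This is where the dimension constraints $H=2$ and $d_H=1$ are tight, and the careful bookkeeping of scalar accumulations while avoiding collisions is the most delicate part of the argument; without APE the construction collapses because identical tokens at different positions would be indistinguishable.

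Finally, having an injective $q$, the remaining layers are feed-forward: the hidden width $r=4$ is enough to build, via standard ReLU bumps, a finite-support interpolator that sends each $q(\bar\mX)_i$ to the desired target value $\bar f(\bar\mX)_i$, and sends everything else to zero. Stacking one such bump per input in $\gG_\delta$ (which requires depth scaling with $|\gG_\delta|$, but depth is unrestricted in $\Omega_{\mathrm{APE}}^{2,1,4}$) yields a network that matches $\bar f$ on $\gG_\delta$, hence approximates $f$ within $2\eps$ on $\gD$. Combining the quantization layers, the contextual-mapping attention stack, and the look-up feed-forward stack produces the desired element of $\Omega_{\mathrm{APE}}^{2,1,4}$, completing the proof sketch. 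The hardest and most technical part remains the explicit construction of the selective-shift attention sequence and the verification that its hard-attention limit can be $\eps$-approximated by genuine $\mathrm{softmax}$ attention with sufficiently large weight norms.
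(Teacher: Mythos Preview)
Your sketch is faithful to the argument of Yun et al.\ \cite{yun2019transformers}, but note that the present paper does \emph{not} supply its own proof of Theorem~\ref{thm:ape-universal}: the result is quoted as background and attributed entirely to \cite{yun2019transformers}. So there is no paper-specific proof to compare against.

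That said, the three-stage strategy you outline (quantization via stacked $\mathrm{FFN}$ layers, contextual mapping via selective-shift attention, function-value look-up via $\mathrm{FFN}$ bumps) is exactly the template that the paper \emph{reuses} in Appendix~\ref{app:proofs-universal} to prove its own Theorem~\ref{thm:universal} for URPE-based Transformers. The correspondence is: your APE matrix $\mE$ with distinct row offsets plays the same role as the paper's position-injection block $g_2$ built from the position-aware condition (Lemma~\ref{lemma:pos}); your selective-shift stack is Lemma~\ref{lemma:selective-shift} and Lemma~\ref{lemma:contextmap}; your look-up stage is Lemma~\ref{lemma:memorize}. One technical point the paper (following \cite{yun2019transformers}) handles that you only gesture at: the passage from hardmax ``modified'' blocks to genuine $\mathrm{softmax}$ blocks is done once at the end via a blanket approximation lemma (Lemma~9 of \cite{yun2019transformers}), rather than by tracking large weight norms through each layer. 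Your sketch is correct in spirit, and the obstacle you flag as hardest---the softmax-to-hardmax approximation---is indeed isolated into a separate lemma rather than threaded through the construction.
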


Though Transformers with APE are conceptionally simple and enjoy good theoretical properties, they have a few known shortcomings. For example, Press et al. \cite{press2022train} showed that APE-based Transformers usually generalize poorly to longer sequences, as those positional embeddings for large indexes are hardly trained. Many works~\cite{shaw2018self,dai2019transformer,raffel2019exploring,ke2021rethinking,he2021deberta} employ Relative Positional Encoding (RPE), which becomes increasingly popular as a powerful way to encode positional information for Transformers and largely overcomes the disadvantages of APE.

\paragraph{Transformer with Relative Positional Encoding.} Different from APE that assigns an embedding $e_i$ for each position $i$, Relative Positional Encoding (RPE) encodes relative distance $i-j$ for each position pair $(i,j)$. With the relative positional encoding, most previous works modified the attention computation defined in Eq.(\ref{eq:attn-mat}) as follows:
\begin{align}
    \label{eq:rpe-attn-mat}
    &\mA^h_{\mathrm{RPE}}(\mX) = \mathrm{softmax}\left(\mX \mW_Q^h(\mX \mW_K^h)^{\top}+\textcolor{blue}{\mB}\right),
\end{align}
where $\textcolor{blue}{\mB}$ is an $n\times n$ matrix. The $(i,j)$-th entry of $\textcolor{blue}{\mB}$, denoted by $b_{ij}$, models the interaction between the $i$-th and $j$-th position. Different parameterizations of $\textcolor{blue}{\mB}$ lead to different model architectures. A few well-known examples include: 
\begin{itemize}
    \item Shaw's RPE \cite{shaw2018self}: $b_{ij} = \mX_i \mW_Q^h \vr_{i-j}^{\top}$, where $\vr_{i-j}$ are learnable vectors.

    \item T5 \cite{raffel2019exploring}: $b_{ij} = m_{i-j}$, where $m_{i-j}$ are learnable scalars, i.e., $\textcolor{blue}{\mB}$ is parameterized as a Toeplitz matrix \cite{gray2006toeplitz, luo2021stable}.
    
    \item DeBERTa \cite{he2020deberta}: $b_{ij} = \mX_i \mW_Q^h \vr_{i-j}^{\top}+\vs_{i-j} (\mX_j \mW_K^h)^{\top}$, where $\vr_{i-j}$ and $\vs_{i-j}$ are learnable vectors.
    
    \item Transformer-XL \cite{dai2019transformer}: $b_{ij} = \mX_i \mW_Q^h (\vr_{i-j} \tilde \mW_K^h)^{\top}+ \vu (\mX_j \mW_K^h)^{\top}+ \vv (\vr_{i-j} \tilde \mW_K^h)^{\top}$, where $\vu, \vv$ and $\tilde \mW_K^h$ are all learnable vectors/matrix, and $\vr_{i-j}$ are sinusoidal positional encoding vectors fixed during training.

\end{itemize}

Several interesting phenomena suggest that RPE-based Transformers have many advantages compared to their APE-based counterparts. Press et al. \cite{press2022train} demonstrated that RPE-based Transformers generalize better on longer sequences. T5 \cite{raffel2019exploring} and Transformer-XL \cite{dai2019transformer} show that Transformers with RPE can achieve strong performance in language understanding and language generation tasks. Recently, RPEs are also popularly used in other domains to encode translation/rotation-invariant structural signals. Typical examples include Swin Transformer \cite{liu2021swin} and Graphormer \cite{ying2021transformers}, both of which use RPE and achieve state-of-the-art performance in image and graph representation learning.

\section{Transformers with RPE are not Universal Approximators}
\label{sec:not-universal}
We are interested in the expressive power of Transformers with RPE and investigate whether this architecture is as powerful as the original APE-based Transformers. To make comparison, we similarly define the function class of the Transformer blocks with RPE-based attention (Eq.(\ref{eq:rpe-attn-mat})) as $\mathrm{T\_blocks}_{\mathrm{RPE}}(H,d_H,r)$, in which the relative positional encoding matrix $\mB$ is assumed to be an \textit{arbitrary parameterized mapping} from the input $\mX$ to an $n\times n$ matrix. The function class represented by Transformers with RPE is defined as:
\begin{equation*}
    \Omega_{\mathrm{RPE}}^{H,d_H,r}=\{g_L\circ \cdots\circ g_1:\mathbb{R}^{n\times d} \to \mathbb{R}^{n\times d}| g_1,\cdots,g_L\in \mathrm{T\_blocks}_{\mathrm{RPE}}(H,d_H,r), L\in\mathbb{N}^*\}.
\end{equation*}

Surprisingly, we present a negative theoretical result: we prove that the function class of Transformers with RPE, $\Omega_{\mathrm{RPE}}^{H,d_H,r}$, is \emph{not} a universal approximator for sequence-to-sequence functions.

\begin{theorem}
\label{thm:not-universal}
    Given $n>2$, $d$ and $\mathcal{D}\subseteq \mathbb{R}^{n\times d}$, 
    assume that the all-zero matrix $\vzero\in\mathcal{D}$. For any $M>0$, there exists a continuous function $\tilde{g}_M:\mathcal{D} \to \mathbb{R}^{n\times d}$, such that
    \begin{equation}
        \begin{aligned}
        \sup _{\boldsymbol{X} \in \mathcal{D} }\left\|\tilde{g}_{M}(\boldsymbol{X})-g(\boldsymbol{X})\right\|_{F}>M
        \end{aligned}
    \end{equation}
    holds for any $g\in\Omega_{\mathrm{RPE}}^{H,d_H,r}$, where $H,d_H,r\in \mathbb{N}^{*}$.
\end{theorem}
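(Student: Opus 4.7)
The central observation I would exploit is that any composition of RPE-based Transformer blocks maps the all-zero input to the all-zero output, i.e., $g(\vzero)=\vzero$ for every $g\in\Omega_{\mathrm{RPE}}^{H,d_H,r}$. Once this invariant is in hand, the theorem is immediate: pick any continuous $\tilde g_M:\mathcal{D}\to\mathbb{R}^{n\times d}$ with $\|\tilde g_M(\vzero)\|_F>M$ (e.g.\ a constant map $\tilde g_M(\mX)\equiv \mJ$ for a fixed $\mJ$ with $\|\mJ\|_F>M$), and use $\vzero\in\mathcal{D}$ to conclude
\begin{equation*}
    \sup_{\mX\in\mathcal{D}}\|\tilde g_M(\mX)-g(\mX)\|_F \;\geq\; \|\tilde g_M(\vzero)-g(\vzero)\|_F \;=\; \|\tilde g_M(\vzero)\|_F \;>\; M.
\end{equation*}

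To establish the invariant, I would induct on the depth $L$ of the block composition, reducing to the single-block claim that every element of $\mathrm{T\_blocks}_{\mathrm{RPE}}(H,d_H,r)$ fixes $\vzero$. For the self-attention layer (Eq.~(\ref{eq:attn})), the attention matrix $\mA^h_{\mathrm{RPE}}(\vzero)=\mathrm{softmax}(\vzero+\mB(\vzero))$ is some well-defined right-stochastic matrix whose precise value is irrelevant, because the update term evaluates to
\begin{equation*}
    \sum_{h=1}^H \mA^h_{\mathrm{RPE}}(\vzero)\,\vzero\,\mW_V^h\mW_O^h \;=\; \vzero,
\end{equation*}
and the residual contributes another $\vzero$. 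For the feed-forward layer (Eq.~(\ref{eq:ffn})), the absence of an additive bias yields $\mathrm{ReLU}(\vzero\,\mW_1)\mW_2=\vzero$, and again the residual is $\vzero$. So every single block fixes $\vzero$, and composition preserves this property.

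The conceptual moral --- and the reason the analogous APE statement does \emph{not} fail --- is that although $\mB(\vzero)$ is free to encode arbitrary positional information through any parameterization (including the Shaw, T5, DeBERTa, and Transformer-XL variants listed in Section~\ref{sec:pre}), that information reaches the output only through the product $\mA^h_{\mathrm{RPE}}(\vzero)\cdot\vzero$, which vanishes identically. In other words, the right-stochastic structure of softmax-based attention combined with the zero content matrix annihilates any positional signal carried by $\mB$. In the APE case this failure mode is avoided because the input $\vzero+\mE=\mE$ is typically nonzero, which is precisely what powers Theorem~\ref{thm:ape-universal}.

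I do not anticipate a significant technical obstacle: the argument is a clean structural observation rather than a delicate approximation estimate, and the hypothesis $\vzero\in\mathcal{D}$ in the theorem's statement is exactly what one needs to invoke it. The only care points are to verify that no hidden biases are present in the excerpt's formulations of $\mathrm{Attn}$ and $\mathrm{FFN}$ (they are not, by Eqs.~(\ref{eq:attn})--(\ref{eq:ffn})), and that each admissible parameterization of $\mB$ yields a well-defined $\mB(\vzero)$, after which the inductive step reduces to routine matrix algebra.
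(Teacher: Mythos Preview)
Your argument is correct under a literal reading of Eqs.~(\ref{eq:attn})--(\ref{eq:ffn}), and it is more elementary than the paper's proof. However, the two proofs exploit \emph{different} invariants, and the difference matters.

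You show $g(\vzero)=\vzero$, which relies on the linear maps in $\mathrm{Attn}$ and $\mathrm{FFN}$ having no additive bias and on the absence of normalization layers with a learnable shift. The paper instead proves the weaker invariant that any input with $n$ \emph{identical} rows is mapped to an output with $n$ identical rows, so $g(\vzero)=c\vone_n$ for some $c$ (not necessarily $0$). It then chooses $\tilde g_M\equiv(2M,0,\dots,0)^{\top}$ and minimizes $(2M-c)^2+(n-1)c^2$ over $c$ to force the Frobenius gap above $M$. The paper's invariant survives affine biases and row-wise normalizations such as LayerNorm (which the paper explicitly mentions in its proof and uses in the appendix constructions), whereas your $g(\vzero)=\vzero$ claim would break as soon as any bias term is present. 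This is also why your proof never touches the hypothesis $n>2$: you do not need the quadratic minimization step.

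A related conceptual point: your route does not actually use the right-stochastic structure of $\mA^h_{\mathrm{RPE}}$ at all --- $\mA\cdot\vzero=\vzero$ holds for any matrix $\mA$. So while your ``moral'' paragraph invokes right-stochasticity, it plays no role in your argument. In the paper's proof, by contrast, right-stochasticity is exactly what preserves the identical-rows property through the attention layer (a right-stochastic matrix times a vector of identical entries returns that same vector), which is the mechanism the paper highlights as the genuine obstruction. Your proof is a valid shortcut for the bias-free formulation stated in the main text, but the paper's argument is the one that isolates the phenomenon the theorem is meant to capture.
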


\begin{proof}
    Without loss of generality, we prove the theorem for $d=1$. The proof can be easily extended to $d>1$ settings. Given $M > 0$, we consider a specific sequence-to-sequence function as target: $\tilde g_M: \mX \mapsto (2M, 0, \cdots, 0)^{\top}$.
    To show $\sup _{\boldsymbol{X} \in \mathcal{D} }\left\|\tilde{g}_{M}(\boldsymbol{X})-g(\boldsymbol{X})\right\|_{F}>M$ holds for any $g\in\Omega_{\mathrm{RPE}}^{H,d_H,r}$, we pick up an input $\mX^*$, which is composed of $n$ \textit{identical} row vectors in $\R^d$, i.e., the sequence consists of $n$ identical tokens. Since the function $\mA^h_{\mathrm{RPE}}(\mX)$ outputs a right stochastic matrix, it is easy to check that $\mathrm{Attn}(\mX) =\mX+ \sum_{h=1}^H\mA^h(\mX) \mX \mW_V^h\mW_O^h$ is also composed of $n$ \textit{identical} row vectors in $\R^d$. Note that $\mathrm{FFN}(\mX)$ and normalizations operate identically on each row vector, we can obtain that the final output of a Transformer block with RPE-based attention is still composed of $n$ \textit{identical} row vectors in $\mathbb{R}^d$.
    
    Since $g$ is a composition of multiple Transformer blocks in $\mathrm{T\_blocks}_{\mathrm{RPE}}(H,d_H,r)$ and $\vzero$ is composed of $n$ \textit{identical} row vectors in $\R^d$, we conclude from the analysis above that $g(\boldsymbol{\vzero})$ is also composed of $n$ \textit{identical} row vectors in $\mathbb{R}^d$, i.e.,  there exists $c\in \mathbb{R}^d$ such that $g(\boldsymbol{\vzero})=c\vone_{n}^{\top}$. Therefore, by applying Cauchy-Schwartz Inequality we obtain
    \begin{equation*}
        \|\tilde g_M(\vzero) - g(\vzero) \|_F^2 = (2M-c)^2 + (n-1)c^2 \geq \frac{4M^2}{1+\frac{1}{n-1}} > M^2
        \Rightarrow \sup_{\mX\in \mathcal{D}}\|\tilde g_M(\mX) - g(\mX) \|_F > M,
    \end{equation*}
    which completes the proof.
\end{proof}

\paragraph{Discussions.} The key observation in Theorem \ref{thm:not-universal} is that $\mA_{\mathrm{RPE}}(\mX)$ always outputs a right stochastic matrix. Even if RPE carries rich positional information, such signal will be suppressed to satisfy $\mA_{\mathrm{RPE}}(\mX)\vone=\vone$ for arbitrary $\mX\in \mathbb{R}^{n\times d}$, where $\vone$ is an all-one n-dimensional vector. As a result, the attention module fails to reflect enough positional information encoded in RPE to the output, which restricts the model capacity. The problem will be significant when the target function $\tilde g$ is very position-sensitive (in the extreme case, $\tilde g$ only depends on the position indexes). We also conduct experiments on simple sequence-to-sequence tasks using synthetic data to support this mathematical claim in Section \ref{Exp-Syn}. One may expect that simply removing the denominator in the $\mathrm{softmax}$ can break the limitation. However, this modification brings significant optimization instability in practice. Given that RPE has many advantages\footnote{On one hand, we claim that RPE-based Transformers cannot achieve universal function approximation (while APE-based models can achieve). On the other hand, we claim RPE is empirically advantageous compared to APE, according to previous works. One may feel there exists a contradiction and get confused. We would like to clarify that the two claims are made in different settings. For example, RPE empirically performs well in generalization to longer sequences, while the theoretical analysis of approximation capability focuses on functions with bounded input lengths (See Theorem \ref{thm:ape-universal} where $n$ is given). Our goal is to design an RPE variant with the same expressiveness as APE in the theoretical setting and enjoys its usefulness in practical scenarios.} compared to APE, it's appealing to design an RPE-based Transformer variant that is a universal approximator of sequence-to-sequence functions and easy to optimize. This is what we precisely work on in the next section.

\section{Making RPE-based Transformers Universal Approximators}
\label{sec:universal}
This section contains two sub-sections. In the first sub-section, we provide a sufficient condition for the RPE-based Transformers to achieve universal approximation. In the second sub-section, we offer a practical instantiation of the Transformer with RPE that satisfies the requirement and is parameter-efficient.

\subsection{A Sufficient Condition to Achieve Universal Approximation}
Motivated by formulation (\ref{eq:attn-mat}) and (\ref{eq:rpe-attn-mat}), we consider a general form $\mA^h_U:\sR^{n\times d}\to \sR^{n\times n}$ and define the corresponding attention layer as
\begin{align}
    \label{eq:general-attn}
    &\mathrm{Attn}_{\mathrm{U}}(\mX) = \mX + \sum_{h=1}^H\mA^h_{\mathrm{U}}(\mX)\mX \mW_V^h\mW_O^h,
\end{align}
We further define the function class of the corresponding Transformer block as $\mathrm{T\_blocks}_{\mathrm{U}}(H,d_H, r)$, and define the function class of Transformers composed of stacked $\mathrm{T\_blocks}_{\mathrm{U}}(H,d_H,r)$ as:
\begin{equation}
    \label{eq:general-transformer}
    \Omega_{\mathrm{U}}^{H,d_H,r}=\{g_L\circ \cdots\circ g_1:\mathbb{R}^{n\times d} \to \mathbb{R}^{n\times d}\mid  g_1,\cdots,g_L\in \mathrm{T\_blocks}_{\mathrm{U}}(H,d_H,r), L\in\mathbb{N}^*\}.
\end{equation}

Our goal is to investigate the requirements on $\mA^h_{\mathrm{U}}$ under which the induced function class $\Omega_{\mathrm{U}}^{H,r}$ can become universal approximators of continuous sequence-to-sequence functions. We provide one sufficient condition in the following theorem. Following Yun et al.~\cite{yun2019transformers} and many other previous theoretical works~\cite{lu2017expressive,hanin2017approximating,yun2020n}, we study the expressiveness of a simplified version of Transformer in which normalization layers are omitted, as it is widely believed that normalization mainly helps optimization but does not hurt the expressive power of the network~\cite{ioffe2015batch,ba2016layer}.

\begin{theorem}
\label{thm:universal}
Given $n,d\in\sN^*$, $ p\in[1,+\infty)$, $\varepsilon>0$, a compact set $\mathcal{D}\subseteq \mathbb{R}^{n\times d}$, and a continuous sequence-to-sequence function $f: \mathcal{D} \to \mathbb{R}^{n\times d}$. Assume that $\mA^h_U$ satisfies the following conditions:
\begin{itemize}
    \item \textbf{Attentive condition.} For any $\vu\in\sR^{d\times 1}$ and $c\in \sR$, there exists a parametrization of $\mA^h_U$, such that $\mA^h_U(\mX)=\mathrm{softmax}\left(\mX \vu (\mX\vu-c\vone)^{\top}\right)$.
    \item \textbf{Position-aware condition.} There exists a parametrization of $\mA^h_U$ and a vector $\vv\in\sR^n$ whose entries are all distinct, such that $\mA^h_U(\mX)\vone =\vv$ for any $\mX\in\sR^{n\times d}$.
\end{itemize}
Then there exists a Transformer network $g \in \Omega_{\mathrm{U}}^{2,1,4}$ such that $\left(\int_{\mathcal{D}}\|f(\mX)-g(\mX)\|_p^p\mathrm{d}\mX\right)^{\frac{1}{p}}<\varepsilon$, where $\|\cdot\|_p$ denotes the entry-wise $\ell^{p}$ norm for matrices.
\end{theorem}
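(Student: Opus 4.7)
The plan is to adapt the universal approximation argument of Yun et al. for APE-based Transformers (Theorem~\ref{thm:ape-universal}) to our general RPE-based setting. Their proof proceeds in three stages: (i) approximate the target continuous function $f$ in $\ell^p$-norm by a piecewise constant function $\bar{f}$ defined on a fine grid $\mathcal{G}_\delta \subset \mathcal{D}$; (ii) construct, with stacked self-attention layers, a ``contextual mapping'' assigning a distinct real-valued identifier to each quantized input sequence; and (iii) use stacked feed-forward layers to memorize the association between these identifiers and the desired quantized outputs. I would preserve this three-stage skeleton and replace the two distinct uses of APE in Yun et al.---namely making tokens distinct across positions, and supplying the specific attention family used inside the contextual mapping---with the two conditions in the hypothesis.

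The first adaptation replaces the role of APE by the position-aware condition. In Yun et al.'s proof, positional information is injected by adding $\mE$ to the input so that every row of $\mX+\mE$ is distinct, allowing subsequent attentions to distinguish positions. Here I would instead insert a single position-aware attention block at the bottom of the network: by the condition there is a parametrization with $\mA^h_{\mathrm{U}}(\mX)\vone = \vv$ for all $\mX$ and with the entries of $\vv$ distinct, so for any constant-row input $\mX = \vone \vx^\top$ we get $\mA^h_{\mathrm{U}}(\mX)\mX \mW_V^h \mW_O^h = \vv \vx^\top \mW_V^h \mW_O^h$ whose $i$-th row is $v_i \vx^\top \mW_V^h \mW_O^h$, distinct across $i$. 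Choosing $\mW_V^h \mW_O^h$ so that $\vx^\top \mW_V^h \mW_O^h$ is a fixed nonzero vector $\vw^\top$, the residual connection then produces an output with rows $\mX_i^\top + v_i \vw^\top$, which matches the format of APE-augmented inputs. A follow-up feed-forward block, using the standard universal approximation of two-layer ReLU networks on bounded sets, can reshape the tag $v_i \vw^\top$ into any pre-specified injective position encoding $\phi(i)$ added to row $i$, reproducing Yun et al.'s positionally-tagged input.

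With positions tagged, the attentive condition supplies exactly the attention family needed to drive Yun et al.'s ``selective shift'' construction: softmax attentions of the form $\mathrm{softmax}(\mX \vu (\mX \vu - c \vone)^\top)$, where the scalar bias $c$ translates the location of the $\mathrm{argmax}$ along the $\mX\vu$ axis and thereby realizes the per-coordinate shifts that are the workhorse of the contextual mapping. Since the attentive condition matches this family verbatim for arbitrary $\vu \in \sR^{d\times 1}$ and $c \in \sR$, their contextual-mapping lemma and memorization lemma port over with essentially unchanged constants into $\Omega_{\mathrm{U}}^{2,1,4}$. Composing the position-tagging block with these subsequent attentive blocks yields a network in $\Omega_{\mathrm{U}}^{2,1,4}$ realizing $\bar{f}$ on $\mathcal{G}_\delta$, and hence approximating $f$ within $\varepsilon$ in the $\ell^p$ norm on $\mathcal{D}$ via the initial quantization step.

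The principal technical obstacle is the handoff between the two conditions: the position-aware block and the attentive blocks instantiate $\mA^h_{\mathrm{U}}$ with different parametrizations, so I must justify both that these parametrizations can coexist across different layers (immediate, since each block independently chooses its parameters and the depth $L$ is unconstrained) and, more delicately, that the injected position tags propagate intact through the subsequent selective-shift layers. Verifying the latter requires careful bookkeeping in the style of Yun et al.'s contextual-mapping lemma to ensure that the selective shifts, which act along the $\vu$-direction chosen per layer, do not overwrite the position coordinate; if necessary, the position-aware condition can be invoked in additional layers to refresh the tags before each further selective shift. Once this bookkeeping is handled, the APE-based template yields the claimed $\ell^p$ approximation of $f$.
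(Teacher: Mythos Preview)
Your three-stage skeleton and the use of the attentive condition to drive the selective-shift construction both match the paper's approach. The real gap is in your position-injection step.

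You write that the position-aware block yields output rows $\mX_i + v_i\vw^\top$ by ``choosing $\mW_V^h\mW_O^h$ so that $\vx^\top\mW_V^h\mW_O^h$ is a fixed nonzero vector $\vw^\top$''. This cannot be done: if $\vx^\top\mW_V^h\mW_O^h=\vw^\top$ for \emph{every} $\vx$, then $\mW_V^h\mW_O^h=0$ and hence $\vw=0$. More fundamentally, the position-aware condition only pins down the \emph{row sums} $\mA^h_{\mathrm U}(\mX)\vone=\vv$; it says nothing about the individual entries of $\mA^h_{\mathrm U}(\mX)$, which may depend on $\mX$ in an uncontrolled way. So for a general input the $i$-th row of $\mA^h_{\mathrm U}(\mX)\mX\mW_V^h\mW_O^h$ is $\sum_j a_{ij}(\mX)\,\mX_j\mW_V^h\mW_O^h$ with $\sum_j a_{ij}(\mX)=v_i$, an $\mX$-dependent mixture rather than a clean tag $v_i\vw^\top$. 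Your analysis only treats the constant-row case $\mX=\vone\vx^\top$, and even there the tag depends on $\vx$.

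The paper's fix is to exploit the (implicit) bias in the value projection: set $\mW_V^h=0$ and use a nonzero scalar bias $c_V^h$, so the value branch becomes $\mA^h_{\mathrm U}(\mX)\,(c_V^h\vone)\,\mW_O^h=c_V^h\,\vv\,\mW_O^h$, which is \emph{input-independent} with $i$-th row proportional to $v_i$. Choosing $c_V^h$ to rescale the minimum gap of $\vv$ above~$1$ and $\mW_O^h=\vone_d^\top$, the block realises exactly $\mX\mapsto\mX+\mU$ with $\mU=\vu\vone_d^\top$ and $|u_i-u_j|>1$ for $i\neq j$. After that, the quantisation, contextual mapping, and memorisation stages proceed as in Yun et al.\ with the attentive condition supplying the hardmax/selective-shift family. (A minor ordering point: the paper quantises \emph{before} injecting position, then runs the contextual mapping on the tagged grid; this sidesteps your worry about tags surviving the selective shifts, since the tag $\mU$ is a fixed additive shift baked into the grid itself.)
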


The detailed proof of Theorem \ref{thm:universal} can be found in Appendix \ref{app:proofs-universal}.
Theorem \ref{thm:universal} presents two conditions to make RPE-based Transformers become universal approximators. Intuitively, the \textit{attentive condition} states that $\mA^h_{\mathrm{U}}$ should contain a special case of the original attention matrix in Eq.(\ref{eq:attn-mat}), where $\mW_Q=\mW_K\in \sR^{d_H\times d}$ and $d_H=1$.
The \textit{position-aware condition} states that $\mA^h_U$ needs to break the limitation of $\mA(\mX)$ being a right stochastic matrix (i.e., $\mA(\mX)\vone= \vone$ for all $\mX\in \mathbb{R}^{n\times d}$). We will present a concrete example satisfying these conditions in the next subsection.

\subsection{A Universal RPE-based Transformer}
\label{sec:urpe}
We develop a new Transformer variant which satisfies the conditions above. In particular, we multiply the softmax attention matrix with another matrix, and obtain
\begin{equation}
    \label{eq:ours}
    \mA_{\mathrm{U}}(\mX)=\mathrm{softmax}\left(\mX \mW_Q(\mX \mW_K)^{\top}+\mB\right) \odot \textcolor{red}{\mC},
\end{equation}
where $\odot$ denotes entry-wise product. $\mB$ can take any form described in Section \ref{sec:pre}. We refer to this attention variant as \textbf{Universal RPE-based Attention} (URPE-based attention). 

To make URPE-based attention rely relative positional information, we set $\mC\in\sR^{n\times n}$ to be a learnable Toeplitz matrix in which each element on the descending diagonal from left to right has the same value. Note that a Toeplitz matrix of shape $n\times n$ has $2n-1$ degrees of freedom. Therefore, we only need $2n-1$ new parameters for each $\mC$. It can be proved that URPE-based Attention satisfies the two conditions in Theorem \ref{thm:universal} and the proof can be found in Appendix \ref{app:proofs-ours}. 
\begin{proposition}
\label{prop:ours}
URPE-based Attention defined in Eq.(\ref{eq:ours}) satisfies the conditions in Theorem \ref{thm:universal}. Consequently, given $n,d\in \mathbb{N}^{*}$, Transformers using this form of attention are universal approximators of continuous sequence-to-sequence functions that map a compact domain in $\mathbb{R}^{n\times d}$ to $\mathbb{R}^{n\times d}$.
\end{proposition}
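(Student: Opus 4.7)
The plan is to verify that URPE-based attention, as defined in Eq.~(\ref{eq:ours}), satisfies both the attentive and position-aware conditions of Theorem~\ref{thm:universal}; the universal-approximation conclusion then follows immediately by invoking that theorem with $H=2$, $d_H=1$, $r=4$.

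For the attentive condition, given any $\vu \in \R^{d\times 1}$ and $c \in \R$, I would instantiate URPE-based attention by setting $\mW_Q = \mW_K = \vu$ (legitimate since $d_H = 1$), letting the relative bias be the $\mX$-dependent mapping $\mB(\mX) = -c\,\mX\vu\vone^\top$, and taking $\mC$ to be the all-ones matrix, which is the trivial Toeplitz matrix obtained by setting every one of the $2n-1$ free Toeplitz parameters to $1$. A short expansion then shows that the argument of the softmax becomes $\mX\vu(\mX\vu)^\top - c\,\mX\vu\vone^\top = \mX\vu(\mX\vu - c\vone)^\top$, and since entrywise multiplication by the all-ones $\mC$ acts as the identity, $\mA_{\mathrm{U}}(\mX)$ matches the required softmax form exactly.

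For the position-aware condition, I would instead use the parametrization $\mW_Q = \mW_K = \vzero$ and $\mB \equiv \vzero$, which annihilates the content-based term and makes $\mathrm{softmax}(\vzero) = \tfrac{1}{n}\vone\vone^\top$. Hence $\mA_{\mathrm{U}}(\mX) = \tfrac{1}{n}\mC$ is independent of $\mX$, with row-sum vector $\tfrac{1}{n}\mC\vone$. It therefore suffices to produce a Toeplitz $\mC$ whose row sums are all distinct; writing $C_{ij} = c_{i-j}$, the $i$-th row sum equals $\sum_{k=i-n}^{i-1} c_k$, and a concrete choice such as $c_0 = 1, c_1 = 2, \ldots, c_{n-1} = n$ with $c_k = 0$ for $k < 0$ yields the strictly increasing triangular-number sequence $1, 3, 6, \ldots, n(n+1)/2$, so the desired $\vv$ exists.

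Neither step presents a substantive obstacle: both reduce to exhibiting a single parametrization and checking a short algebraic identity, after which the heavy lifting is done by Theorem~\ref{thm:universal}. The only mildly subtle point is that the attentive-step construction uses an $\mX$-dependent $\mB$; this is automatically admissible under the general setting of Section~\ref{sec:not-universal}, where $\mB$ is treated as an arbitrary parameterized mapping of $\mX$, and analogous constructions can also be carried out within the concrete families (Shaw, DeBERTa, Transformer-XL) by absorbing the constant shift $-c\vone$ into the learnable relative vectors.
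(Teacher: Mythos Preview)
Your proof is correct and follows the same overall strategy as the paper: verify both conditions of Theorem~\ref{thm:universal} by exhibiting explicit parametrizations, then invoke the theorem. For the position-aware condition the two arguments are essentially interchangeable—the paper takes $\mC$ to be the upper-triangular all-ones Toeplitz matrix, giving row sums $(n,n-1,\ldots,1)/n$, whereas you pick a different Toeplitz matrix with triangular-number row sums; both work.

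The one noteworthy difference is how the $-c\vone$ shift in the attentive condition is produced. The paper does \emph{not} route it through $\mB$; instead it reinstates the bias term in the key projection (suppressed in Eq.~(\ref{eq:ours})), writing $\mA_{\mathrm{U}}(\mX)=\mathrm{softmax}\bigl(\mX\mW_Q(\mX\mW_K+\vone\vc_K^\top)^\top+\mB\bigr)\odot\mC$ and setting $\mW_Q=\mW_K=\vu$, $\vc_K=-c$, $\mB=\vzero$, $\mC=\vone\vone^\top$. This is a bit cleaner than your construction: it demands nothing of $\mB$ beyond the ability to realize the zero matrix, and hence applies uniformly to every RPE family in Section~\ref{sec:pre}, including T5 where $\mB$ is $\mX$-independent and your choice $\mB(\mX)=-c\,\mX\vu\vone^\top$ is unavailable. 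Your route via an $\mX$-dependent $\mB$ is perfectly valid in the general framework of Section~\ref{sec:not-universal} (and, as you observe, in the Shaw/DeBERTa families), but the paper's bias-term device is the more economical one.
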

To further improve parameter efficiency, we set $\mC$ to be shared across different layers but to be unique for each attention head. As an example, in Transformer-XL (see Section \ref{Exp-LM}), we introduce only about 4K new parameters, which is negligible compared to the 151M parameters of the Transformer-XL model but still leads to non-trivial improvements. 

We make two more discussions for our designed URPE-based attention. The first one is about applying URPE-based attention in the causal setting. Causal attention is important in language generation tasks. URPE-based Attention is compatible with it as one can set the $(i,j)$-th entry of $\mC$ to be 0 for $i>j$.
The second one is on the initialization of $C$. In practice, the matrix $\mC$ can be initialized as an all-one matrix. Therefore, the model behaves identically to the original RPE-based model at initialization, and the additional parameters in $\mC$ are learned gradually. We can also fine-tune any well-trained RPE-based Transformer to its URPE-based counterpart by setting $\mC$ as an all-one matrix and further fine-tuning the model to learn $\mC$.

\section{Experiments}
\label{sec:exp}
In this section, we empirically study the effectiveness of the proposed model. In particular, we aim at answering the following questions through experiments:
\begin{itemize}
    \item \textbf{Question 1}: Can the theoretical results on the approximation capability of RPE-based Transformer and URPE-based Transformer be reflected in certain experiments?
    \item \textbf{Question 2}: With different RPE methods (the matrix $B$ in Eq.(\ref{eq:ours})), can URPE-based Transformer outperform its RPE-based counterpart in real-world applications?
    \item \textbf{Question 3}: Can URPE-based Attention serve as a versatile module to improve the general Transformers beyond language tasks? 
\end{itemize}
We will answer each question with carefully designed experiments in the following sub-sections. Due to space limitation, we only present results on representative model architectures, task types, and data modalities in the main body of the paper. More results are presented in the appendix. All codes are implemented based on the official codebases of Fairseq~\cite{ott2019fairseq} and Graphormer~\cite{ying2021transformers} in PyTorch~\cite{paszke2019pytorch}.

\subsection{Synthetic Tasks}\label{Exp-Syn}

\begin{figure}[t]
    \centering
    \includegraphics[width=1\linewidth]{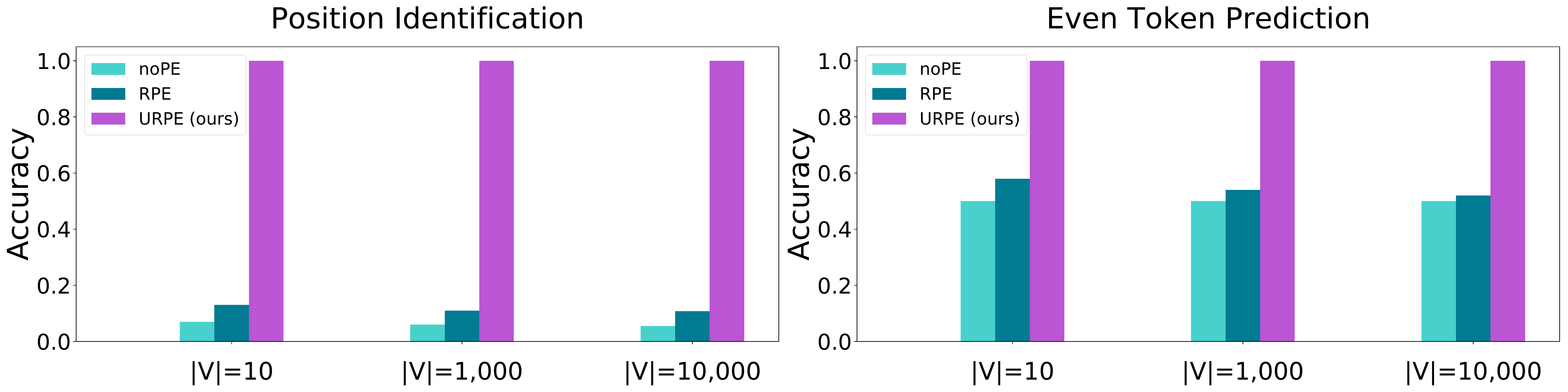}
    \caption{Results on synthetic sequence-to-sequence tasks: (1) Position Identification (Left Panel); (2) Even Token Prediction (Right Panel). $|V|$ is the vocabulary size. The URPE-based Transformer model consistently solves both tasks across different settings while other methods fail.}
    \label{fig:syn-task}
\end{figure}
\begin{figure}[t]
    \centering
    \includegraphics[width=1\linewidth]{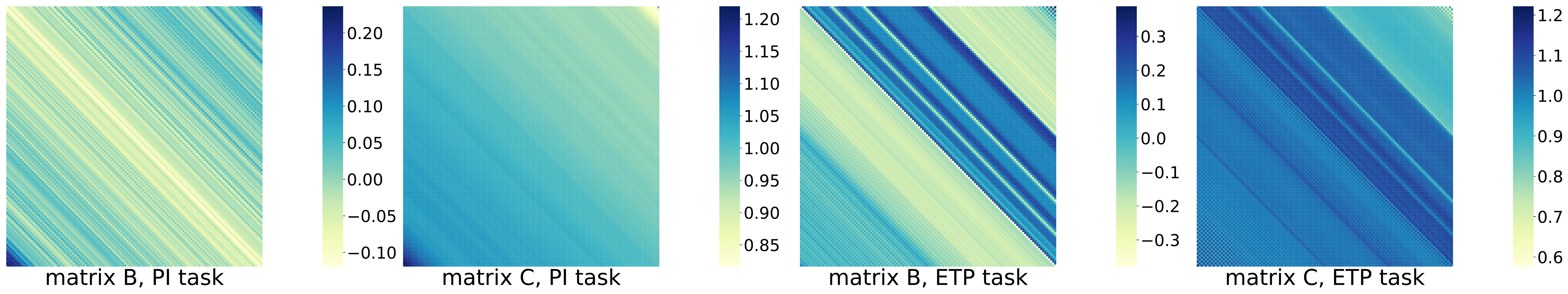}
    \caption{Visualizations of the learned Universal RPE (matrix $B$ and $C$ in Eq.(\ref{eq:ours})). It can be easily seen that the matrix $\mB$ and $\mC$ capture different aspects of positional information.}
    \label{fig:syn-vis}
\end{figure}

To empirically verify our theoretical results on the approximation capability of RPE-based Transformer and URPE-based Transformer, we design two synthetic tasks: 1) Position Identification; 2) Even Token Prediction.

Both Position Identification (PI) task and Even Token Prediction (ETP) task are sequence-to-sequence prediction tasks. Given a sequence of tokens $s=(w_1, w_2, \cdots, w_n)$ , the PI task is to predict the position index of each token in the sequence, i.e., the target sequence-to-sequence function $f$ to approximate can be defined as
\begin{eqnarray}
f_{\mathrm{PI}}(w_1, w_2, \cdots, w_n)=(1,2,\cdots,n)
\end{eqnarray}
The ETP task is defined as follows: for the first half of positions in a sequence, the task requires the model to output the input tokens at positions with even number index; for the remaining half of positions, the task requires the model to output the special token End-Of-Sentence (EOS), i.e., 
\begin{eqnarray}
f_{\mathrm{ETP}}(w_1, w_2, \cdots, w_n)=(w_2, w_4,\cdots,w_{n}, \mathrm{EOS},\cdots,\mathrm{EOS})
\end{eqnarray}

Both tasks require the model to accurately encode the positional information, which would be difficult for RPE-based Transformers to capture. For both tasks, we use synthetic datasets with randomly generated sequences. In detail, we vary the token vocabulary size from [10, 1000, 10000] and set the sequence length to 128. We choose the vanilla Transformer as the base model and compare the following ways to encode positional information: 1) no positional encodings (noPE); 2) T5-style relative positional encoding (RPE) ~\cite{raffel2019exploring}; 3) URPE with T5-style RPE backbone Transformer. The number of layers and the number of attention heads are set to 3 and 12, respectively. The hidden dimension is set to 768.

\paragraph{Results.} We use token-level accuracy as the evaluation metric. The experimental results are shown in Figure \ref{fig:syn-task}. From the figure, it can be easily seen that the Transformer without PE and the Transformer with T5-style RPE cannot perfectly solve the synthetic tasks (less than 60\% accuracy). On the contrary, the URPE-based Transformer achieves $100\%$ accuracy on both tasks. Firstly, this result clearly indicates that our proposed model outperforms the backbone T5-style RPE-based Transformer by a large margin. Furthermore, we can see that even for such simple tasks, the Transformer with T5-style RPE sometimes fails, while the URPE-based Transformer succeeds and approximates the target function well, which is consistent with our theoretical findings. Lastly, we provide visualizations of the learned Universal RPE (Eq.(\ref{eq:ours})) on both tasks in Figure \ref{fig:syn-vis}, which show that the matrix $\mB$ and $\mC$ capture different aspects of positional information. 

\subsection{Language Modeling}\label{Exp-LM}

\begin{table}[t]
  \caption{Language model perplexity scores on WikiText-103 validation and test set. We use $^{*}$ to indicate the best performance. All the results of the baseline methods are reported in \cite{dai2019transformer}}
  \vspace{4px}
  \label{tab:lm}
  \centering
  \resizebox{0.92\textwidth}{!}{ \renewcommand{\arraystretch}{1.25}
    \begin{tabular}{llll}
    \toprule
    Model & \#Params & Valid Perplexity & Test Perplexity \\ \midrule
    LSTM~\cite{grave2016improving} & - & / & 48.7 \\
    TCN~\cite{bai2018empirical} & - & / & 45.2 \\
    GCNN-8~\cite{dauphin2017language} & - & / & 44.9 \\
    LSTM+Neural cache~\cite{grave2016improving} & - & / & 40.8 \\
    GCNN-14~\cite{dauphin2017language} & - & / & 37.2 \\
    QRNN~\cite{merity2018analysis} & 151M & / & 33.0 \\
    Hebbian+Cache~\cite{rae2018fast} & - & / & 29.9 \\
    \midrule
    Transformer-XL Base~\cite{dai2019transformer} & 151M & 23.1 & 24.0 \\
    Transformer-XL Base + URPE-based Attention (ours) & 151M & 22.4$^{*}$ & 23.2$^{*}$ \\\bottomrule
    \end{tabular}}
\end{table}

We use language modeling to study the effectiveness of the proposed URPE-based Attention. Language modeling is an important practical application which usually requires the modelling of long-term dependency between tokens. We conduct experiments on the WikiText-103 dataset~\cite{merity2016pointer}, which contains 103M training tokens from 28K articles, with an average length of 3.6K tokens per article. Relative positional encoding methods are popularly used in language modelling. We choose Transformer-XL model~\cite{dai2019transformer} as the backbone model of our URPE-based Transformer. Following~\cite{dai2019transformer}, the number of layers and the number of attention heads are set to 16 and 10 respectively. The dimension of hidden layers and feed-forward layers are set to 410 and 2100. The detailed descriptions of the baselines and training settings are presented in the appendix. 

\paragraph{Results.}  We show the perplexity scores on both validation and test set of different models in Table \ref{tab:lm}. It can be easily seen that the Transformer-XL equipped with our URPE-based attention achieves 22.4 and 23.2 valid and test perplexity scores, respectively, which are 0.7 and 0.8 lower than the backbone Transformer-XL model and also significantly better than other baselines. First, the results suggest that our proposed URPE-based attention can be well applied to Transformer-XL in real-world applications. Together with the observations in Section 5.1, we believe our URPE can be used in other RPE-based architectures, such as ~\cite{shaw2018self,he2021deberta}. It is worth noting that our model has negligible more parameters (about 4k) compared to the backbone Transformer-XL. Thus, the improvement in the perplexity should be mostly attributed to the stronger expressiveness of the model.

\subsection{Graph Learning}\label{Exp-Graph}

\begin{table}[t]
\caption{Mean Absolute Error (MAE) on ZINC test set. We use $^{*}$ to indicate the best performance.}
  \vspace{4px}
  \label{tab:zinc}
  \centering
  \resizebox{0.97\textwidth}{!}{ \renewcommand{\arraystretch}{1.25}
    \begin{tabular}{lccc}
    \toprule
    Model & \#Params & Test MAE on ZINC-Subset & Test MAE on ZINC-Full \\ \midrule
    GIN~\cite{xu2018how} & 509,549 & 0.526$\pm$0.051 &  0.088$\pm$0.002 \\
    GraphSAGE~\cite{hamilton2017inductive} & 505,341 & 0.398$\pm$0.002 & 0.126$\pm$0.003 \\
    GAT~\cite{velivckovic2018graph} & 531,345 & 0.384$\pm$0.007 & 0.111$\pm$0.002 \\
    GCN~\cite{kipf2016semi} & 505,079 & 0.367$\pm$0.011 & 0.113$\pm$0.002 \\
    MoNet~\cite{monti2017geometric} & 504,013 & 0.292$\pm$0.006 & 0.090$\pm$0.002 \\
    GatedGCN-PE~\cite{bresson2017residual} & 505,011 & 0.214$\pm$0.006 & - \\
    MPNN(sum)~\cite{gilmer2017neural} & 480,805 & 0.145$\pm$0.007 & - \\
    HIMP~\cite{fey2020hierarchical} & 614,516 & 0.151$\pm$0.006 & 0.036$\pm$0.002 \\
    PNA~\cite{corso2020principal} & 387,155 & 0.142$\pm$0.010 & - \\
    \midrule
    GT~\cite{dwivedi2021generalization} & 588,929 & 0.226$\pm$0.014 & - \\
    SAN~\cite{kreuzer2021rethinking} & 508,577 & 0.139$\pm$0.006 & - \\
    \midrule
    Graphormer~\cite{ying2021transformers} & 489,321 & 0.122$\pm$0.006 & 0.052$\pm$0.005 \\
    Graphormer+URPE-based Attention (ours) & 491,737 &   ~~0.086$\pm$0.007$^{*}$ & ~~0.028$\pm$0.002$^{*}$ \\\bottomrule
    \end{tabular}}
\end{table}
\begin{table}[t]
\caption{Results on PCQM4M from OGB-LSC. We use $^*$ to indicate the best performance. The results of the baselines are reported in~\cite{ying2021transformers,hu2021ogb}.}
  \vspace{4px}
  \label{tab:pcq}
  \centering
  \resizebox{0.66\textwidth}{!}{ \renewcommand{\arraystretch}{1.25}
    \begin{tabular}{lll}
    \toprule
    Model & \#Params & Valid MAE \\
    \midrule
    GCN~\cite{kipf2016semi} & 2.0M & 0.1691  \\
    GIN~\cite{xu2018how} & 3.8M & 0.1537 \\
    GCN-VN~\cite{kipf2016semi,gilmer2017neural} & 4.9M & 0.1485  \\
    GIN-VN~\cite{xu2018how,gilmer2017neural} & 6.7M &  0.1395 \\
    GINE-VN~\cite{brossard2020graph,gilmer2017neural} & 13.2M & 0.1430 \\
    DeeperGCN-VN~\cite{li2020deepergcn,gilmer2017neural} & 25.5M & 0.1398 \\
    \midrule
    GT~\cite{dwivedi2021generalization} & 0.6M & 0.1400 \\
    GT-Wide~\cite{dwivedi2021generalization} & 83.2M & 0.1408 \\
    \midrule
    Graphormer~\cite{ying2021transformers} & 12.5M & 0.1264 \\
    Graphormer + URPE-based Attention (ours) & 12.5M & 0.1238$^{*}$ \\
    \bottomrule
    \end{tabular}}
\end{table}
We further examine whether the proposed URPE-based Attention can serve as a versatile module to improve the general RPE-based Transformers beyond language tasks. The Transformer-based models have become increasingly popular in the graph learning area~\cite{ying2021transformers,kreuzer2021rethinking,dwivedi2021generalization}. Among those models, the recently proposed Graphormer~\cite{ying2021transformers} achieves state-of-the-art performance in many graph learning tasks~\cite{ying2021first,shi2022benchmarking}. In Graph Transformers, RPE is used rather than APE since RPE only calculates distances between nodes, which naturally preserves many invariant and equivariant properties. 

The attention computation in Graphormer also follows the Eq.(\ref{eq:rpe-attn-mat}) in Section \ref{sec:pre}. Specifically, Graphormer calculates the shortest-path distance between any pair of nodes and encodes this information as a bias term in the softmax attention to reflect the relative position of any node in the graph. We refer the readers to~\cite{ying2021transformers} for the detailed description of Graphormer. Similar to previous experiments, we adapt the URPE-based Attention to the Graphormer and compare them on two benchmark datasets covering graph representation learning tasks from small to large scale datasets: ZINC from Benchmarking-GNNs~\cite{dwivedi2020benchmarking} and PCQM4M from Open Graph Benchmark Large Scale Challenge (OGB-LSC)~\cite{hu2021ogb}. For both tasks, we choose several competitive Transformer based models and GNNs as our baselines. Details of the experimental settings are presented in the appendix.

\textbf{ZINC.}$\enspace\ $ ZINC is a real-world dataset which consists of 250K molecular graphs. The task is to predict the constrained solubility of a molecule which is an important chemical property for drug discovery. We train our models on both the ZINC-Full and ZINC-Subset (12K selected graphs following~\cite{dwivedi2020benchmarking}). To demonstrate the power of our method and for fair comparison, we set the parameter budget of the model to be less than 500K following~\cite{dwivedi2020benchmarking,ying2021transformers}. We build on the Graphormer~\cite{ying2021transformers} model which consists of 12 layers. The dimension of hidden layers and feed-forward layers are set to 80. The number of attention heads are set to 32.

\textbf{PCQM4M.}$\enspace\ $ PCQM4M is a quantum chemistry regression task in OGB-LSC~\cite{hu2021ogb}. The PCQM4M dataset contains more than 3.8 million molecular graphs in total, which is currently the largest graph-level prediction dataset. The state-of-the-art architecture for this task is the Graphormer model introduced above. We still follow~\cite{ying2021transformers} to set the hyper-parameters in the Graphromer model and equip it with URPE. In detail, our Graphormer with URPE-based Attention consists of 6 layers and 32 attention heads. The dimension of hidden layers and feed-forward layers are set to 512.

\textbf{Results.}$\enspace\ $ The experimental results on ZINC and PCQM4M are shown in Table \ref{tab:zinc} and \ref{tab:pcq}, where the score is averaged over four experiments with different seeds. It can be easily seen that the Graphormer model equipped with our URPE-based Attention consistently outperforms the backbone Graphormer model on both ZINC and PCQM4M tasks. In particular, our URPE-based Attention enables the Graphormer model to reduce more than 40\% relative mean absolute error on the test set of ZINC-Subset and ZINC-Full. On the PCQM4M task, the improvement is around 0.003 mean absolute error which is also a significant improvement under the quantum chemistry precision. It is worth noting that our Graphormer model with URPE-based Attention achieves competitive performance compared to the Graphormer Base model with 48.3M parameters reported in \cite{ying2021transformers}. Therefore, we believe our proposed architecture significantly improves the expressive power of the Transformer backbones and can be well extended to practical scenarios beyond language tasks.

\subsection{More Analyses}
\begin{table}[t]
  \caption{Comparison of RPE-based and UPRE-based Transformer-XL models of different sizes. We report perplexity scores on WikiText-103 validation set. $L$ denotes the number of layers.}
  \vspace{4px}
  \label{tab:lm-layers}
  \centering
  \resizebox{0.76\textwidth}{!}{ \renewcommand{\arraystretch}{1.1}
    \begin{tabular}{llll}
    \toprule
    Model & $L=4$ & $L=8$ & $L=16$ \\
    \midrule
    Transformer-XL & 29.6 & 26.0 & 23.1 \\
    Transformer-XL + URPE-based Attention (ours) & 28.7 & 25.2 & 22.4 \\\bottomrule
    \end{tabular}}
\end{table}
\begin{table}[t]
\caption{Inference Runtime (ms in log base 2) and Peak Memory Usage (GB) of RPE-based Transformer and URPE-based Transformer. $N$ denotes the input sequence length.}
  \vspace{4px}
  \label{tab:time-memory}
  \centering
  \resizebox{0.95\textwidth}{!}{ \renewcommand{\arraystretch}{1.2}
    \begin{tabular}{l|ccc|ccc}
    \toprule
    \multicolumn{1}{c|}{\multirow{2}{*}{Model}} & 
    \multicolumn{3}{c|}{Inference Runtime} &
    \multicolumn{3}{c}{Peak Memory Usage}\\
    \cline{2-7}
    & $N=128$ & $N=256$ & $N=512$ & $N=128$ & $N=256$ & $N=512$ \\
    \midrule
    RPE-based Transformer & 4.55 & 5.60 & 6.79 & 0.96 & 1.12 & 1.86 \\
    URPE-based Transformer (ours) & 4.59 & 5.66 & 6.91 & 0.97 & 1.17 & 2.04 \\
    \bottomrule
    \end{tabular}}
\end{table}

\textbf{URPE-based Attention consistently improves the performance of models of different sizes. } We conduct ablation experiments on the Language Modeling task and vary the number of layers in [4, 8, 16] to investigate different model sizes. Following the experiment settings in Section~\ref{Exp-LM}, the number of attention heads is set to 10. The hidden dimension is set to 41. The dimension of the feed-forward layer is set to 2100. The results are presented in Table~\ref{tab:lm-layers}. It can be easily seen that our URPE-based Attention consistently reduces the perplexity scores of Transformer-XL models of different sizes, which indeed demonstrates the versatility of our URPE-based Attention.

\textbf{Runtime and Memory Usage Evaluation.}$\enspace\ $ We further conduct memory and time costs profiling experiments on our URPE-based Transformers. We choose the vanilla Transformer as the backbone model. The number of layers and the hidden dimension are set to 12 and 768 respectively. The number of attention heads is set to 12. The batch size is set to 32. We vary the sequence length from [128, 256, 512]. We run profiling of all the models on a 16GB NVIDIA Tesla V100 GPU. Following Combiner~\cite{ren2021combiner}, we compare the inference speed and memory costs of the vanilla Transformer with RPE and our URPE. The results are presented in Table~\ref{tab:time-memory}, which show that our URPE only increases minor computational costs.

\textbf{Summary.} In this section, we design a series of experiments to answer the questions on the effectiveness and applicability of the proposed URPE-based Attention. All the experimental results suggest that our theoretical findings are convincing, and Transformers with our modification are effective, powerful, and widely applicable across different tasks.

\section{Related Work} \label{sec:related-work}
\looseness=-1\textbf{Expressive power of Neural Networks.}$\enspace\ $ Quantifying the capacity of neural networks is an important research direction in the literature on deep learning.~\cite{cybenko1989approximation,funahashi1989approximate,hornik1991approximation,barron1994approximation} showed that a neural network with one hidden layer and unbounded width can approximate arbitrary continuous functions on compact support with arbitrarily small error. Many works also studied the width efficiency on the expressive power of neural networks~\cite{lu2017expressive, hanin2017approximating, lin2018resnet} and proved that ReLU networks with a bounded width but unlimited depth can achieve universal function approximation. Recently, there has been increasing interest in the theoretical understanding of Transformer models. Yun et al. \cite{yun2019transformers} theoretically showed that Transformers can approximate any continuous sequence-to-sequence functions (i.e., universal approximation) on a compact domain by proving that stacking of self-attention layers can compute contextual mappings of the input embeddings. Dong et al.~\cite{dong2021attention} analyzed the limitations of attention-only Transformers without considering FFN blocks, normalizations, and skip connections. Hron et al. \cite{hron2020infinite} analyzed the behavior of multi-head attention and connect it with the Gaussian process when the number of heads tends to be infinity. In \cite{Cordonnier2020On}, it is proved that a multi-head attention layer with enough heads is at least as expressive as any convolution layer. All works above consider Transformers with absolute positional encoding, which is the main difference between them and our work.

\textbf{Positional encoding methods in Transformers.}$\enspace\ $ In \cite{vaswani2017attention}, the vanilla Transformer encodes the positional information via the absolute positional encoding (APE). Shaw et al. \cite{shaw2018self} is the first to introduce relative positional encoding (RPE) to Transformer. From then on, many works explored different RPE strategies based on \cite{shaw2018self}. Transformer-XL \cite{dai2019transformer} re-parameterizes the self-attention to integrate relative positional encoding and enables long sequence modelling. T5 \cite{raffel2019exploring} simplifies the vector representations of relative positions to scalars. Kitaev et al. \cite{kitaev2018constituency} disentangles the positional and content information in the Transformer encoder, which leads to an improved constituency parser. Ke et al. \cite{ke2021rethinking} further shows that such disentanglement also improves Transformer in general language pre-training and achieves superior performance on various downstream tasks. There are also works that encodes the positional information via other tools like trees~\cite{shiv2019novel}, complex numbers~\cite{wang2019encoding}, dynamic systems~\cite{liu2020learning}, Fourier features~\cite{li2021learnable}. Compared to most previous works inspired by practical scenarios, our URPE-based attention is theoretically motivated by investigating the expressive power of RPE-based Transformers in a principled way. 

\section{Conclusion}
\label{sec:conclude}
In this paper, we first investigate the theoretical aspect of the RPE-based Transformers. In particular, we study their expressive power and provide a surprising theoretical finding which shows that widely-used Transformers with RPE are \emph{not} universal function approximators. To design a more powerful RPE-based Transformer, we present sufficient conditions on the attention module to achieve universal function approximation and develop a novel \emph{Universal RPE-based Transformer} using a new relative positional encoding approach. We conduct carefully designed experiments on synthetic sequence data, natural language, and graphs to show that our model brings consistent performance gains compared with existing RPE-based Transformers. In the future, we will benchmark our Universal RPE on other RPE-based Transformers and typical tasks to further verify its versatility as a basic module for Transformer-based models.

\section*{Acknowledgements}
We thank Tianle Cai and Jianlin Su for the helpful discussions. We also thank all the anonymous reviewers for the very careful and detailed reviews as well as the valuable suggestions. Their help has further enhanced our work. This work is supported by National Science Foundation of China (NSFC62276005), The Major Key Project of PCL (PCL2021A12), Exploratory Research Project of Zhejiang Lab (No. 2022RC0AN02), and Project 2020BD006 supported by PKUBaidu Fund.
\bibliography{main}
\bibliographystyle{plain}

\newpage
\section*{Checklist}

\begin{enumerate}

\item For all authors...
\begin{enumerate}
  \item Do the main claims made in the abstract and introduction accurately reflect the paper's contributions and scope?
    \answerYes{}
  \item Did you describe the limitations of your work?
    \answerYes{see Section \ref{sec:conclude}}
  \item Did you discuss any potential negative societal impacts of your work?
    \answerNo{}
  \item Have you read the ethics review guidelines and ensured that your paper conforms to them?
    \answerYes{}
\end{enumerate}

\item If you are including theoretical results...
\begin{enumerate}
  \item Did you state the full set of assumptions of all theoretical results?
    \answerYes{}
        \item Did you include complete proofs of all theoretical results?
    \answerYes{}
\end{enumerate}

\item If you ran experiments...
\begin{enumerate}
  \item Did you include the code, data, and instructions needed to reproduce the main experimental results (either in the supplemental material or as a URL)?
    \answerYes{}
  \item Did you specify all the training details (e.g., data splits, hyperparameters, how they were chosen)?
    \answerYes{}
        \item Did you report error bars (e.g., with respect to the random seed after running experiments multiple times)?
    \answerYes{}
        \item Did you include the total amount of compute and the type of resources used (e.g., type of GPUs, internal cluster, or cloud provider)?
    \answerYes{}
\end{enumerate}

\item If you are using existing assets (e.g., code, data, models) or curating/releasing new assets...
\begin{enumerate}
  \item If your work uses existing assets, did you cite the creators?
    \answerYes{}
  \item Did you mention the license of the assets?
    \answerYes{}
  \item Did you include any new assets either in the supplemental material or as a URL?
    \answerNo{}
  \item Did you discuss whether and how consent was obtained from people whose data you're using/curating?
    \answerNo{}
  \item Did you discuss whether the data you are using/curating contains personally identifiable information or offensive content?
    \answerNo{}
\end{enumerate}

\item If you used crowdsourcing or conducted research with human subjects...
\begin{enumerate}
  \item Did you include the full text of instructions given to participants and screenshots, if applicable?
    \answerNA{}
  \item Did you describe any potential participant risks, with links to Institutional Review Board (IRB) approvals, if applicable?
    \answerNA{}
  \item Did you include the estimated hourly wage paid to participants and the total amount spent on participant compensation?
    \answerNA{}
\end{enumerate}

\end{enumerate}


\appendix

\newpage
\section{Proof of Theorem \ref{thm:universal}}
\label{app:proofs-universal}
In this section, we provide the proof of the Theorem \ref{thm:universal}. First, we define a modified version of Transformer blocks, which will be used in the subsequent proof. Then we present a few technical lemmas. Finally, this section is completed with the proof of Theorem \ref{thm:universal}.

\subsection{Modified Transformer blocks}
Following \cite{yun2019transformers,yun2020n}, we use a modified version of Transformer blocks in our construction first, and show that such modified Transformer blocks can be approximated by the originally defined $\mathrm{T\_blocks}_{\mathrm{U}}$ up to arbitrary precision. 

\begin{definition}[Modified Transformer blocks]
    A modified Transformer block is defined as a composition of modified self-attention layer $\mathrm{Attn}_{m}$ and modified feed-forward layer $\mathrm{FFN}_{m}$, where 
    \begin{align}
        \label{eq:m-attn}
        &\mathrm{Attn}_{m}(\mX) = \mX+ \sum_{h=1}^H\mA^h(\mX) \mX \mW_V^h\mW_O^h;\\
        \label{eq:m-ffn}
        &\mathrm{FFN}_{m}(\mX) =\mX+ \phi (\mX\mW_1)\mW_2.
    \end{align}
    
    In Eq.(\ref{eq:m-attn}), the attention matrix $\mA^h_U(\mX)=\mathrm{hardmax}\left(\mX \vu (\mX\vu-c\vone)^{\top}\right)$ , where  $\vu\in\sR^{d\times 1}$ and $c\in \sR$ are learnable parameters. Thus it expresses the \textbf{hard attention operation}. 
    
    In Eq.(\ref{eq:m-ffn}), the activation $\phi$ is a (possibly discontinuous) piece-wise linear function with at most three pieces.
    
    We denote the function class of modified Transformer blocks by $\mathrm{T\_blocks}_{\mathrm{m}}(H,d_H,r)$.
\end{definition}

A nice property of the modified Transformer blocks is that compositions of modified Transformer blocks can be approximated by compositions of the originally defined $\mathrm{T\_blocks}_{\mathrm{U}}$ up to arbitrary precision, which is proved in Lemma 9 of \cite{yun2019transformers}. Another important property of the modified Transformer blocks is that they can implement the following ``selective shift'' operator:

\begin{lemma}
\label{lemma:selective-shift}
For any real numbers $a,b_Q$ and $b'_Q$ satisfying $b_Q < b'_Q$ and a vector $z\in\sR^{d}$, there exists $g\in \mathrm{T\_blocks}_{\mathrm{m}}(2,1,1)$ such that $g(\mX)=\mX + a\Psi(\mX; b_Q, b'_Q)$, where
\begin{align}
    \Psi(\mX; b_Q, b'_Q)_{i,1} &=
    \begin{cases}
        \max\limits_k \mX_{k}\vz - \min\limits_k \mX_{k}\vz & \;b_Q < \mX_{i}\vz < b'_Q, \\
        0 & \;\mathrm{otherwise}
    \end{cases}\\
    \Psi(\mX; b_Q, b'_Q)_{i, j} &= 0 \quad (j\geq 2),
\end{align}
and $\mX_k$ denotes the $k$-th row of $\mX$.
\end{lemma}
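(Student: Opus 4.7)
The plan is to build a single modified block $g = \mathrm{FFN}_m \circ \mathrm{Attn}_m \in \mathrm{T\_blocks}_m(2,1,1)$ whose two attention heads prepare the extrema $M := \max_k \mX_k\vz$ and $m := \min_k \mX_k\vz$ in a designated coordinate of every target row, and whose one-neuron FFN gates the range $M - m$ on the interval $(b_Q, b'_Q)$. The residual connection then yields $g(\mX) = \mX + a\Psi$.

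For the attention, I first unroll the hardmax of $\mX\vu(\mX\vu - c\vone)^\top$ entry by entry: at row $i$ the selection is $\arg\max_j \mX_j\vu$ when $\mX_i\vu > 0$ and $\arg\min_j \mX_j\vu$ when $\mX_i\vu < 0$, with $|c|$ taken large enough (using compactness of the input domain) that $\mX_j\vu - c$ has a fixed sign and ties are eliminated. Head $1$ is then parameterized by $(\vu_1, c_1)$ and a rank-one value projection $\mW_V^1\mW_O^1 = \alpha_1\vz\ve_1^\top$ so that $M$ is deposited into column $1$ of every target row via a query whose sign is controlled by $\mX_i\vz$. Head $2$ uses $(\vu_2, c_2)$ and $\mW_V^2\mW_O^2 = \alpha_2\vz\ve_1^\top$ chosen with the opposite sign structure so that $-m$ is deposited into the same coordinate; after summing, the first coordinate of every target row has been shifted by exactly $M - m$. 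For the FFN, with $r = 1$ the update is $\phi(\mX\mW_1)\mW_2$ with a single scalar per row: I let $\mW_1$ read off the scalar $\mX_i\vz$ (which is preserved by the residual), take $\phi$ to be a three-piece (possibly discontinuous) piecewise-linear bump equal to $1$ on $(b_Q, b'_Q)$ and $0$ outside, and set $\mW_2 = a\ve_1^\top$, so that the indicator is multiplied by $a$ and added into column $1$, leaving all other columns untouched.

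The hard part is that the restricted attention form $\mathrm{hardmax}(\mX\vu(\mX\vu - c\vone)^\top)$ forces the query direction to coincide with the key direction, tying the sign of $\mX_i\vu$ to whether the head returns $\arg\max$ or $\arg\min$: a single head can never give $M$ and $m$ to the same row, so max and min cannot be independently harvested per row as in the original Yun et al.\ construction. Producing $M$ and $-m$ simultaneously in column $1$ across every target row therefore demands a delicate case split on the sign structure of $\mX_i\vz$ on $(b_Q, b'_Q)$ and a coordinated choice of $(\vu_1, c_1, \mW_V^1\mW_O^1)$ and $(\vu_2, c_2, \mW_V^2\mW_O^2)$ that exploits, rather than fights, this sign coupling. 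A secondary nuisance is that the FFN, with only one hidden scalar per row, must implement the $(b_Q, b'_Q)$-gate without disturbing any coordinate of $\mX$ other than the first; this is arranged by the bump shape of $\phi$ together with the rank-one support of $\mW_2$.
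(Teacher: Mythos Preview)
Your plan has a structural error in the FFN step that cannot be repaired. With $r=1$ the feed-forward update at row $i$ is $\phi\bigl((\mathrm{Attn}_m(\mX))_i\mW_1\bigr)\mW_2$: a scalar (the value of $\phi$) times a \emph{fixed} vector $\mW_2$. If $\phi$ is the bump $\mathbbm{1}_{(b_Q,b'_Q)}$ and $\mW_2=a\ve_1^{\top}$, the FFN adds $a\cdot\mathbbm{1}[b_Q<\cdot<b'_Q]$ to column~1, not $a(M-m)\cdot\mathbbm{1}[\cdot]$. The quantity $M-m$ never enters the argument of $\phi$, so the FFN cannot multiply the indicator by the range. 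Consequently the block you describe outputs $\mX_{i,1}+(M-m)+a$ on ``target'' rows (if the attention deposits $M-m$ there), which is not $\mX_{i,1}+a(M-m)$. If instead you scale the attention contribution by $a$ up front, the attention \emph{alone} already gives $\mX+a\Psi$ on the rows it touches, and the FFN bump is superfluous---but then you must explain how the attention targets exactly the rows with $b_Q<\mX_i\vz<b'_Q$, which your proposal does not do.

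The construction the paper cites (Yun et al.) puts \emph{all} of the gating inside the attention and sets the FFN to the identity. The crucial point you are missing is the role of the shift $c$: with the query shifted, the $(i,j)$ entry is $(\mX_i\vz-c)(\mX_j\vz)$, so hardmax at row $i$ returns $\arg\max_j\mX_j\vz$ when $\mX_i\vz>c$ and $\arg\min_j\mX_j\vz$ when $\mX_i\vz<c$. Taking head~1 with $c_1=b_Q$, head~2 with $c_2=b'_Q$, both with $\vu=\vz$ and value projections $\pm a\vz\ve_1^{\top}$, one gets contributions $aM-am=a(M-m)$ exactly on $(b_Q,b'_Q)$ and $aM-aM=0$ or $am-am=0$ outside. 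So the ``sign coupling'' you flag is not an obstacle but the mechanism: the constant $c$ acts as the threshold that determines whether a row sees the max or the min, and two heads with thresholds $b_Q$ and $b'_Q$ carve out the interval. Your case-splitting on the sign of $\mX_i\vz$ is unnecessary once $c$ is used this way.
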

The proofs of Lemma \ref{lemma:selective-shift} can be found in \cite{yun2019transformers}.

\subsection{Input quantization}
In this subsection, we show how to quantize any input $\mX\in[0,1]^{n\times d}$ to a $\delta$-grid point in $\{0, \delta, \dots, n-\delta\}^{n\times d}$. 
In fact, the quantization procedure can be conducted with $\frac{d}{\delta}$ modified Transformer blocks.
\begin{lemma}
\label{lemma:quantize}
Consider an entry-wise quantization mapping defined as $g_{1}^{\mathrm{entry}}(t)=\delta\lfloor \frac{t}{\delta} \rfloor$. 
There exists a function $g_{1}: [0,1]^{d \times n} \to \sR^{d \times n}$ composed of $\frac{d}{\delta}$ modified Transformer blocks in $\mathrm{T\_blocks}_{\mathrm{m}}(2,1,1)$, which implements the entry-wise quantization mapping $g^{\rm entry}_{1}$ over each entry of its input.
\end{lemma}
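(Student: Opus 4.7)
The plan is to construct $g_1$ as the sequential composition of $d/\delta$ modified Transformer blocks, indexed by pairs $(j,k)$ with $j\in\{1,\ldots,d\}$ and $k\in\{0,\ldots,1/\delta-1\}$. The $(j,k)$-th block will be dedicated to quantizing exactly those entries of the $j$-th column that currently lie in the half-open interval $[k\delta,(k+1)\delta)$, leaving every other entry of $\mX$ alone. The whole proof is then a verification that the block actions can be localized to a single (column, level) pair in this way and that they do not interfere with one another.

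First I would make each self-attention layer trivial. Since $\mathrm{Attn}_{m}(\mX) = \mX + \sum_h \mA^h(\mX)\mX\mW_V^h\mW_O^h$, taking $\mW_V^h=\vzero$ for both heads collapses the layer to the identity regardless of the hard-attention pattern produced by $\mA^h$. So every block contributes only through its FFN, $\mathrm{FFN}_{m}(\mX)=\mX+\phi(\mX\mW_1)\mW_2$. With hidden width $r=1$ we have $\mW_1\in\sR^{d\times 1}$ and $\mW_2\in\sR^{1\times d}$; choosing $\mW_1=\ve_j$ and $\mW_2=\ve_j^{\top}$ (where $\ve_j$ is the $j$-th standard basis vector in $\sR^d$) makes the block act entry-wise on column $j$ as $\mX_{ij}\mapsto \mX_{ij}+\phi(\mX_{ij})$, while all other entries are untouched.

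The heart of the construction is the choice of $\phi$ in each block. In block $(j,k)$ I would use the (discontinuous) three-piece piecewise linear activation
\begin{equation*}
\phi_k(t)=\begin{cases}0 & t<k\delta,\\ -(t-k\delta) & k\delta\le t<(k+1)\delta,\\ 0 & t\ge (k+1)\delta,\end{cases}
\end{equation*}
which is explicitly permitted by the definition of modified Transformer blocks. Adding $\phi_k(\mX_{ij})$ sends any entry currently lying in $[k\delta,(k+1)\delta)$ exactly to the grid point $k\delta$ and does nothing outside that interval. The key non-interference observation is that a value already quantized to $k\delta$ is fixed by every later $\phi_{k'}$ with $k'\neq k$, because $k\delta\notin [k'\delta,(k'+1)\delta)$; and values in different columns cannot interact since each block's FFN writes only to column $j$ and attention has been neutralized.

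A straightforward induction over $k$ (taken, say, in increasing order) then shows that after the $1/\delta$ blocks dedicated to column $j$ have been applied, every entry $\mX_{ij}$ has been replaced by $\delta\lfloor \mX_{ij}/\delta\rfloor$. Iterating over all $d$ columns yields exactly $d\cdot(1/\delta)=d/\delta$ modified Transformer blocks in $\mathrm{T\_blocks}_{\mathrm{m}}(2,1,1)$, matching the statement. The only thing requiring genuine care is the bookkeeping at interval endpoints, i.e.~fixing a consistent half-open convention so that each real number lies in exactly one $[k\delta,(k+1)\delta)$; this is precisely where the freedom for $\phi_k$ to be discontinuous at $(k+1)\delta$ is used, so no deeper obstacle arises.
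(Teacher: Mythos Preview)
Your proposal is correct and follows essentially the same approach as the paper: neutralize the attention layers, then use $1/\delta$ FFN layers per column with a three-piece activation that maps $[k\delta,(k+1)\delta)$ to $k\delta$ and is the identity elsewhere. The only cosmetic difference is that the paper uses a single fixed activation $\phi(t)=-t\,\mathbbm{1}_{0\le t<\delta}$ together with a bias shift $-k\delta$ inside the FFN, whereas you absorb the shift into a family of activations $\phi_k$; given that the stated definition of $\mathrm{FFN}_m$ in Eq.(\ref{eq:m-ffn}) has no explicit bias, your parametrization is arguably the cleaner match.
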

\begin{proof}
    For each column in the input $\mX\in\sR^{n\times d}$, we show how to use $\frac{1}{\delta}$ stacked modified feed-forward layers to quantize it while keeping all the other columns untouched.
    For the $r$-th row, we use $\frac{1}{\delta}$ layers of the following form, in which the $k$-th layer is:
    \begin{equation}
    \label{eq:quantize-ffn}
        \mathrm{FFN}_{m}(\mX) =\mX+ \phi (\mX\ve^{(r)}-k\delta \vone_n ) \ve^{(r)\top},
    \end{equation}
    where $\ve^{(r)}=(0,\cdots, 0, 1, 0, \cdots)^{\top}$ is the $r$-th $d$-dimensional one-hot vector and $\phi(t)=-t\mathbbm{1}_{0 \leq t < \delta}$ is a piece-wise linear activation function.
    
    Therefore, we can construct $\frac{d}{\delta}$ modified Transformer blocks, where the modified self-attention layers express the identity mapping, and the $\left(\frac{r-1}{\delta}+k\right)$-th modified feed-forward layer is defined as Eq.(\ref{eq:quantize-ffn}), to implements the entry-wise quantization mapping $g^{\rm entry}_{1}$.
\end{proof}

\subsection{Injecting positional information with the position-aware condition}
In this subsection, we introduce new techniques to leverage the position-aware condition defined in Theorem \ref{thm:universal} and inject positional information into the Transformer block. Note that the new techniques we use are non-trivial since we do not rely on any explicit absolute positional encoding in the definition of the position-aware condition.

\begin{lemma}
\label{lemma:pos}
Assume $H,d_H,r\in\sN^*$ and that $\mA^h_U$ satisfies the \textbf{position-aware condition} defined in Theorem \ref{thm:universal}. 
Then there exists $\vu=(u_1~\cdots~u_n)^{\top}\in\sR^n$ and a Transformer block $g_2 \in \mathrm{T\_blocks}_{\mathrm{U}}(2,1,4)$ such that
\begin{itemize}
    \item $|u_i-u_j|>1$ for any $i\neq j$;
    \item $g_1(\mX)=\mX+\mU$, where $\mU=\vu\vone_d^{\top}$ and $\vone_d$ is a $d$-dimensional all-one vector.
\end{itemize}
\end{lemma}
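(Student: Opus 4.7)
The plan is to produce a single Transformer block that implements the additive shift $\mX\mapsto\mX+\vu\vone_d^\top$, with $|u_i-u_j|>1$ for $i\ne j$. The only ingredient supplied by the hypothesis is the position-aware parametrization of $\mA^h_U$, whose row-sum vector $\vv\in\sR^n$ has $n$ pairwise distinct entries. Since $\vv$ is already distinct, obtaining $\vu$ with pairwise gaps larger than $1$ is a matter of rescaling; the substance of the proof is producing a clean position-only additive shift using the attention heads and the FFN.

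The first step is to take head $1$ under the position-aware parametrization, so that $\mA^1_U(\mX)\vone_n=\vv$ for every $\mX$. Since $d_H=1$, the head-$1$ contribution has the rank-one form $(\mA^1_U(\mX)\mX\mW_V^1)\mW_O^1$, and choosing $\mW_O^1=\vone_d^\top$ broadcasts the single scalar $s_i(\mX):=(\mA^1_U(\mX)\mX\mW_V^1)_i$ across all $d$ coordinates of row $i$. Because the row sums of $\mA^1_U(\mX)$ are the pairwise distinct entries of $\vv$, its rows themselves are pairwise distinct, so the map $i\mapsto s_i$ carries genuine position-dependent information. The second head is then configured so that its contribution is row-identical, for instance via a uniform-attention parametrization of $\mA^2_U$ (in the URPE class, obtainable by zeroing the softmax inputs and taking $\mC^{(2)}$ to be the all-ones matrix), which amounts to a single global summary of $\mX$ added uniformly to every row. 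The FFN of hidden width $r=4$ then has enough ReLU capacity to combine $s_i(\mX)$ with this global summary row-by-row, cancel the content-dependent portion of $s_i(\mX)$, and leave behind the position-only residual $u_i\vone_d^\top$ on row $i$; a final scaling of $\vv$ produces $\vu$ with $|u_i-u_j|>1$.

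The principal obstacle is the content-cancellation step. The attention output is strictly linear in $\mX$, so the positional signal is always bundled with a content contribution that must be stripped off by the row-wise FFN. The rank-one $\vone_d^\top$ broadcast is essential here: it packages the positional information of row $i$ into a single shared scalar direction that a row-wise ReLU network can isolate, while the uniform head-$2$ contribution provides the matching content summary needed to realize the cancellation. Verifying that a ReLU MLP of hidden width exactly $r=4$ suffices to perform this cancellation uniformly over the inputs arising from the preceding quantization step in the proof of Theorem~\ref{thm:universal} is the delicate technical step, and is where the specific constant $r=4$ in the statement of the lemma enters in a substantive way.
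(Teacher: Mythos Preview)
Your proposal has a genuine gap: the content-cancellation step cannot succeed as written. Consider the input $\mX=\vzero$. With your head~1 and head~2 both contributing terms of the form $\mA^h_U(\mX)\mX\mW_V^h\mW_O^h$, every such term vanishes at $\mX=\vzero$, so the attention output is simply $\vzero$ and all $n$ rows are identical. But the feed-forward layer acts \emph{row-wise}: it is a single fixed function applied to each row separately, with no access to the row index. Hence it must send identical rows to identical outputs, and cannot produce the distinct values $u_1,\dots,u_n$ you need. More generally, whenever $\mX\mW_V^1=\vzero$ your head~1 carries no positional signal at all, so no row-wise FFN, of width $4$ or any width, can recover it. The difficulty is not that the cancellation is ``delicate''; it is that without a bias the positional information never enters the attention output in the first place when the content happens to be zero.

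The paper's proof sidesteps this entirely by remembering that the value projection is affine, not linear: writing the head contribution as $\mA^h_U(\mX)\bigl(\mX\mW_V^h+c_V^h\vone_n\bigr)\mW_O^h$, one sets $\mW_V^h=\vzero$ and keeps only the scalar bias $c_V^h$. Then the head outputs $c_V^h\,\mA^h_U(\mX)\vone_n\,\mW_O^h=c_V^h\,\vv\,\vone_d^\top$, which is \emph{purely positional} for every $\mX$, with no content to cancel. Choosing $c_V^h=1/\min_{i\ne j}|v_i-v_j|$ and $\mW_O^h=\vone_d^\top$ for both heads gives $\vu=2c_V^1\vv$ with $|u_i-u_j|>1$, and the FFN is set to the identity (all parameters zero). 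In particular, the hidden width $r=4$ plays no role in this lemma; it is used elsewhere in the proof of Theorem~\ref{thm:universal}, not here.
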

\begin{proof}
    By leveraging the position-aware condition, we can construct a self-attention layer such that the attention matrix in each head satisfies $\mA^h_U(\mX)\vone =\vv$, where $\vv\in\sR^n$ is a row vector with $n$ distinct entries.

    We rewrite the generalized self-attention layer (Eq.(\ref{eq:general-attn})) with explict bias term as
    \begin{equation}
        \label{eq:general-attn-bias}
        \mathrm{Attn}_{\mathrm{U}}(\mX) = \mX + \sum_{h=1}^H\left(\mA^h_{\mathrm{U}}(\mX)(\mX \mW_V^h+ c_V^{h}\vone )\right) \mW_O^h + \vone \vc_O^{h\top},
    \end{equation}
    where $\mW_V^h\in\sR^{d\times 1}$, $\mW_O^h\in\sR^{1\times d}$, $ c_V^{h}\in\sR$ and $\vc_O^{h\top}\in \sR^{d}$ (note that $d_H=1$ and $H=2$ in this lemma). To obtain the desired mapping $g_2$, we set
    \begin{equation}
        \mW_V^h=0,~\mW_O^h=\vone_d^{\top},~c_V^{h} =1/\min_{i\neq j}|v_i-v_j|,~\vc_O^{h\top}=\vzero\quad(h=1,2).
    \end{equation}

    Assume that $\vu=(u_1~\cdots~u_n)^{\top} =Hc_V^{1}\vv$. Then $\vu$ is an $n$-dimensional vector in which $|u_i-u_j|>1$ for any $i\neq j$. Besides, we have 
    \begin{equation}
        \sum_{h=1}^H\left(\mA^h_{\mathrm{U}}(\mX)(\mX \mW_V^h+ c_V^{h}\vone )\right) \mW_O^h + \vone \vc_O^{h\top} = \begin{pmatrix}
            u_1  & u_1 & \cdots & u_1 \\
            u_2  & u_2 & \cdots & u_2 \\
            \vdots & \vdots &   &  \vdots \\
            u_n & u_n & \cdots & u_n
        \end{pmatrix}=\mU.
    \end{equation}

    Finally, we set all the learnable parameters in the subsequent feed-forward layer to be 0. In this case, the feed-forward layer is equivalent to an identity mapping due to the skip connection. Therefore, $g_2(\mX)=\mathrm{FFN}(\mathrm{Attn}_{\mathrm{U}}(\mX))=\mX+\mU$.
\end{proof}
With the position-aware condition satisfied, the Lemma \ref{lemma:pos} indeed shows that the positional information can be injected into the (quantized) input $\mX$ and further be fed into the subsequent Transformer blocks, since $\mU$ is composed of $n$ distinct rows which help to distinguish each position. 

\subsection{Contextual mapping with the attentive condition}
In this subsection, we follow \cite{yun2019transformers,yun2020n} to show that the modified attention layer with the (hard) attentive condition can implement ``contextual mapping'' defined in \cite{yun2019transformers} and achieves universal approximation.

The concept of ``contextual mapping'' is defined as below.
\begin{definition}[Contextual mapping~\cite{yun2019transformers}]
\label{def:context-mapping}
Consider a finite set $\sL \subset \sR^{n \times d}$.
A map $q: \sL \to \sR^n$ defines a \textbf{contextual mapping} over $\sL$ if and only if the map satisfies the following properties:
\begin{itemize}
    \item For any $\mL \in \sL$, the entries of $q(\mL)$ are all distinct.
    \item For any $\mL, \mL' \in \sL$ satisfying $\mL \neq \mL'$, all entries of $q(\mL)$ and $q(\mL')$ are distinct.
\end{itemize}
\end{definition}

With this concept, we present the following lemma:

\begin{lemma}
\label{lemma:contextmap}
Assume $n$ and $\delta^{-1}$ are two positive integers and that $n,\delta^{-1}\geq 2$. Suppose that the matrix $\mU=\vu\vone_d^{\top}$, where $\vu$ is an $n$-dimensional vector satisfying $|u_i-u_j|>1$ for any $i\neq j$.
Then, there exist a function $g_{3}: \{0,\delta, 2\delta, \cdots, 1\}^{n \times d} \to \sR^{n \times d}$ composed of stacked modified attention blocks, and a vector $\vz \in \sR^d$, such that the mapping $q(\mX)=g_{3}(\mX+\mU)\vz$ is a contextual mapping over $\{0,\delta, 2\delta, \cdots, 1\}^{n \times d}$.
\end{lemma}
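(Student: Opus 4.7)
The plan is to reduce the construction to the contextual mapping argument of Yun et al.~\cite{yun2019transformers}, exploiting the fact that the matrix $\mU$ provides enough positional structure (its $n$ rows are distinct and mutually separated by more than $1$) to play the role that absolute positional encoding played in that proof. First, I would choose $\vz = \eta\cdot(1,\delta^{-1},\delta^{-2},\ldots,\delta^{-(d-1)})^{\top}$ for a sufficiently small $\eta>0$, so that $\mX_i\vz$ is a distinct scalar for each possible row pattern in $\{0,\delta,\ldots,1\}^d$ while the total range of $\mX_i\vz$ is strictly smaller than the minimal gap between entries of $\vu$ (rescaled by $\vone_d^{\top}\vz$). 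With this choice the scalars $(\mX+\mU)_i\vz = \mX_i\vz + u_i\vone_d^{\top}\vz$ automatically fall into $n$ disjoint position-indexed intervals, inside each of which the row's content is uniquely encoded.

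Next, I would iteratively apply the selective-shift operator of Lemma~\ref{lemma:selective-shift}. For each position $i$, I can pick query thresholds $(b_Q,b_Q')$ that isolate exactly the one scalar lying in the $i$-th position-indexed interval, and then apply a modified Transformer block with shift amplitude $a_i$. Choosing the amplitudes geometrically (say $a_i = \gamma^{i}$ for a sufficiently large $\gamma$), as in the Yun et al.\ construction, causes the successive shifts to stack like digits in a positional number system, so the final scalar at row $i$ is determined bijectively by the $i$-th row's content together with the current global statistic (the max-minus-min of all scalars). After $O(n)$ such blocks, the resulting scalars at the $n$ positions form a hash of the full input $\mX$ that is distinct across positions and disjoint across inputs.

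The main obstacle is calibrating $\eta$ and the amplitudes $a_i$ so that three invariants hold simultaneously throughout the entire sequence of shifts: (i) the position-indexed intervals remain pairwise disjoint, so that the selective-shift thresholds can continue to pick out individual rows at every step; (ii) for distinct inputs $\mX\neq \mX'$ the shifts at every row end up in pairwise disjoint intervals, which is exactly what Definition~\ref{def:context-mapping} demands for a contextual mapping; and (iii) the final scalar values at different rows remain distinct. This bookkeeping mirrors Lemma~6 of Yun et al.~\cite{yun2019transformers} almost verbatim; the only new ingredient here is to verify that the lower-bounded gap $|u_i - u_j|>1$ plays the role previously played by the integer-spaced APE, which succeeds by rescaling the content encoding $\mX_i\vz$ via $\eta$ so that it fits comfortably inside any such positive-length gap. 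Once these invariants are maintained, defining $g_3$ as the composition of the above modified attention blocks and returning the scalar $g_3(\mX+\mU)\vz$ yields the desired contextual mapping.
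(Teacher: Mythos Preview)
Your high-level plan---use the separated entries of $\vu$ to partition the scalar line into position-indexed intervals, then invoke the selective-shift machinery of Yun et al.---is the same as the paper's. The gap is in the contextual-mapping step itself.

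With one selective-shift block per position (thresholds covering the whole $i$-th interval) and geometric amplitudes $a_i=\gamma^i$, consider the row you process first, say the one with index $\sigma(1)$ having the smallest $u$-value. At that moment the shift applied to it is $a_1\cdot(\max_k v_k-\min_k v_k)$ with $v_k=(\mX+\mU)_k\vz$; this equals $a_1(v_{\sigma(n)}-v_{\sigma(1)})$ and depends only on the two extreme rows. Row $\sigma(1)$ is never selected again, so its \emph{final} scalar is $v_{\sigma(1)}+a_1(v_{\sigma(n)}-v_{\sigma(1)})$, which is blind to $v_{\sigma(2)},\ldots,v_{\sigma(n-1)}$. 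Two inputs differing only at an interior position therefore produce identical values at row $\sigma(1)$, violating the second clause of Definition~\ref{def:context-mapping}. Your claim that ``after $O(n)$ such blocks, the resulting scalars \ldots\ are disjoint across inputs'' is false as stated.

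The paper, following Yun et al., addresses this in two ways your proposal omits. First, it uses $n\delta^{-d}$ blocks with \emph{fine} thresholds of width $\delta$ centered at each pair $s_r+k\delta$, not one coarse block per position; this is what Lemma~6 of \cite{yun2019transformers} actually does, and it makes the map from $(\mX_1\vz,\ldots,\mX_n\vz)$ to the \emph{last} row's scalar $\tilde\mX_n\vz$ injective. Second---and this is the piece missing entirely from your outline---a final attention block is appended whose hardmax selects the maximal row and shifts \emph{every} row by a large multiple of $\tilde\mX_n\vz$. Only after this global broadcast does each row encode the whole input. Even if your coarse per-position scheme with varying amplitudes could be made to push an injective hash into the last row (which is plausible but is a departure from, not a repetition of, Lemma~6), you would still need this broadcast step to obtain a contextual mapping.
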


\begin{proof}
    This proof mainly follows and extends the technique in \cite{yun2019transformers}.
    
    We sort the entries in $\vu$ and define $u_{(1)}<\cdots<u_{(n)}$ to be a permutation of $(u_1, \cdots, u_n)$. Without the loss of generality, we assume that $u_{(n)}=u_n$.
    
    Define $\vz=(1, \delta^{-1}, \cdots, \delta^{-d})^{\top}$ and $s_r = u_{(r)} \displaystyle{ \sum_{k=0}^{d-1} \delta^{-k}}$.
    
    By leveraging Lemma \ref{lemma:selective-shift}, we construct $n\delta^{-d}$ modified Transformer blocks, where the $(r \delta^{-d}+k)$-th block express the mapping $\mX \mapsto \mX+\Psi\left(\mX; s_{r}+k\delta -\frac{\delta}{2}, s_{r}+k\delta +\frac{\delta}{2}\right)$ for $0\leq r<n$ and $0\leq k< \delta^{-d}$. Given an input $\mX$, we denote the output of these stacked layers by $\tilde{\mX}$. With a similar argument to that in Lemma 6 of \cite{yun2019transformers}, one can show that the mapping from $\begin{pmatrix}\mX_1\vz&\mX_2\vz&\cdots&  \mX_n\vz\end{pmatrix}$ to $\tilde{\mX}_{n}\vz$ is one-to-one.
    
    Finally, we add an extra modified Transformer block, in which the modified feed-forward layer expresses an identity mapping, and the modified self-attention layer is
    \begin{equation*}
        \mathrm{Attn}_{m}(\mX) =\mX+ \left(\mX \vu (\mX\vu)^{\top}\right) \mX \vu (n\delta^{-(n+1)d-1}\ve^{(1)\top}).
    \end{equation*}
    
    Note that we only use one attention head here and all the parameters in the other attention head are set to 0. This block shifts all the layers by $n \delta^{-(n+1)d-1} \tilde{\mX}_{n}\vz$, and ensures that any input $\mX$ would be mapped to a unique number $q(\mX)$, thus implementing a contextual mapping.
\end{proof}

\subsection{Function value mapping via FFN}
Once obtaining a contextual mapping, we can use stacked feed-forward layers to implement the function value mapping and obtain the desired sequence-to-sequence function.
\begin{lemma}[Lemma 8 in \cite{yun2020n}]
\label{lemma:memorize}
Given $\delta>0$ a mapping $h: \sR^{n \times d} \to \sR^{n\times d}$ and assume there exists a vector $\vu$ such that the mapping $q(\mX)=h(\mX)\vz$ is a contextual mapping over $\{0,\delta, 2\delta, \cdots, 1\}^{n \times d}$. Then for any $f:\sR^{n \times d} \to \sR^{n\times d}$, there exists a function $g_{4}: \sR^{n \times d} \to \sR^{n\times d}$ which is compositions of modified Transformer blocks in $\mathrm{T\_blocks}_{\mathrm{m}}(2,1,1)$, such that  
\begin{equation*}
    g_4(h(\mX))=f(\mX)\quad(\forall \mX\in \{0,\delta, 2\delta, \cdots, 1\}^{n \times d}).
\end{equation*}
\end{lemma}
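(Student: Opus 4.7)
The plan is to realise $g_4$ as a long composition of row-wise modified feed-forward layers, each tailored to one specific (grid input, row index) pair; the fact that a layer in $\mathrm{T\_blocks}_{\mathrm{m}}(2,1,1)$ has hidden dimension $r=1$ and is allowed any three-piece piecewise-linear activation is precisely what enables such a one-shot, interference-free update.

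First I would enumerate the finite domain $\sL:=\{0,\delta,\ldots,1\}^{n\times d}$ as $\{\mX^{(1)},\ldots,\mX^{(|\sL|)}\}$ and record the $n|\sL|$ scalars $v_{t,i}:=\big(h(\mX^{(t)})\vz\big)_i$. The contextual-mapping hypothesis guarantees that these scalars are pairwise distinct, so I may pick a positive gap $\eta$ strictly smaller than every pairwise difference. For each pair $(t,i)$ I would then build one modified feed-forward layer
\[
\mathrm{FFN}_{(t,i)}(\mY) \;=\; \mY \;+\; \phi_{t,i}(\mY \vz)\,\vw_{t,i}^{\top},
\]
where the inner weight $\mW_1=\vz$ reads off $\mY_j\vz$ from every row $j$, the outer weight $\vw_{t,i}^{\top}\in\sR^{1\times d}$ equals the target shift $f(\mX^{(t)})_i - h(\mX^{(t)})_i$, and $\phi_{t,i}$ is a three-piece piecewise-linear bump supported in $[v_{t,i}-\eta/2,\,v_{t,i}+\eta/2]$ with $\phi_{t,i}(v_{t,i})=1$. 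Attaching a trivial (all-parameters-zero) modified attention layer then promotes this to a legitimate block in $\mathrm{T\_blocks}_{\mathrm{m}}(2,1,1)$.

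The heart of the argument is a careful ordering of these $n|\sL|$ blocks so that, fed the input $h(\mX^{(t)})$, the composition maps row $i$ of $h(\mX^{(t)})$ to $f(\mX^{(t)})_i$ and leaves every other row on target. By distinctness of the $v$'s, only the $n$ bumps indexed by the correct $t$ can ever be triggered, each on exactly one row. The main obstacle I anticipate is avoiding downstream cross-talk: once row $i$ is shifted to $f(\mX^{(t)})_i$, its new $\vz$-projection could accidentally land near some other $v_{t',i'}$ and trigger an unintended later bump. I would resolve this with a two-phase construction (along the lines used in \cite{yun2019transformers,yun2020n}): phase one moves each identified row to a pre-reserved ``scratch'' scalar placed outside the union of supports of every phase-two bump; phase two reads those scratch scalars and writes the final targets $f(\mX^{(t)})_i$. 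Since the scratch values can be placed arbitrarily far out on the real line, non-interference becomes automatic, and the resulting $g_4$ satisfies $g_4(h(\mX^{(t)}))=f(\mX^{(t)})$ for every $\mX^{(t)}\in\sL$, as required.
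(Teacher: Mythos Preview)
The paper does not supply its own proof of this lemma; it simply cites it as Lemma~8 of \cite{yun2020n}. Your proposal is correct and is essentially the construction used in \cite{yun2019transformers,yun2020n}: read off the unique contextual scalar $v_{t,i}$ of each (grid point, row) pair via the inner weight $\mW_1=\vz$, use a narrow three-piece activation to detect it row-wise, and add the appropriate shift; your two-phase ``park at a far-away scratch scalar, then write the final value'' device is exactly the interference-avoidance mechanism employed there.

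One minor point worth tightening: you describe $\phi_{t,i}$ as a bump with $\phi_{t,i}(v_{t,i})=1$, but a continuous triangular tent has four linear pieces, not three. Since the modified block explicitly allows \emph{discontinuous} three-piece activations, simply take $\phi_{t,i}=\mathbbm{1}_{[v_{t,i}-\eta/2,\,v_{t,i}+\eta/2]}$ (pieces $0$, $1$, $0$), which is what the original construction uses.
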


\subsection{Finishing the Proof of Theorem \ref{thm:universal}}
With the preparations in the previous subsections, we present the proof of Theorem \ref{thm:universal}.

\begin{proof}
    Without the loss of generality, assume that $\mathcal{D}\subseteq [0,1]^{n\times d}$. Applying the Tietze extension theorem \cite{folland1999real}, one obtain an extended mapping $f: [0,1]^{n\times d}\to \sR^{n\times d}$. Therefore, it suffices to prove the theorem for any continuous sequence-to-sequence $f:[0,1]^{n \times d} \to \sR^{n\times d}$.
    
    Suppose $\delta$ is a positive real number such that $\delta^{-1}\in\sZ$. 
    Applying Lemma \ref{lemma:quantize}, Lemma \ref{lemma:pos}, Lemma \ref{lemma:contextmap} and Lemma \ref{lemma:memorize}, we obtain $\tilde g_1$ (the quantization mapping), $g_2$ (the positional mapping), $\tilde g_3$ (the contextual mapping) and $\tilde g_4$ (the function value mapping), such that 
    \begin{equation}
        \tilde g_4\circ \tilde g_3\circ g_2\circ \tilde g_1(\mX) =f(\mX)\quad(\forall \mX\in \{0,\delta, 2\delta, \cdots, 1\}^{n \times d}).
    \end{equation}
    
    Note that $g_2$ is an originally defined generalized Transformer blocks in $\mathrm{T\_blocks}_{\mathrm{U}}(2,1,4)$, while $\tilde g_1$, $\tilde g_3$ and $\tilde g_4$ are all compositions of modified Transformer blocks in $\mathrm{T\_blocks}_{\mathrm{m}}(2,1,1)$. Applying Lemma 9 in \cite{yun2019transformers}, there exist compositions of originally defined generalized Transformer blocks $g_1$, $g_3$ and $g_4$, such that 
    \begin{equation}
    \label{eq:finish-1}
        \left(\int_{\mathcal{D}}\|g(\mX)-\tilde g(\mX)\|_p^p\mathrm{d}\mX\right)^{\frac{1}{p}}<\frac{\varepsilon}{2},
    \end{equation}
    where $g=g_4\circ g_3\circ g_2\circ g_1\in\Omega_{\mathrm{U}}^{2,1,4}$ and $\tilde g = \tilde g_4\circ \tilde g_3\circ g_2\circ \tilde g_1$.
    
    Define $\tilde g$ as a piece-wise linear approximation of $f$, such that $f(\mX)=\bar f(\mX)$ for any $\mX\in \{0,\delta, 2\delta, \cdots, 1\}^{n \times d}$. Then, by choosing $\delta$ to be sufficiently small, we have
    \begin{equation}
        \label{eq:finish-2}
        \left(\int_{\mathcal{D}}\|f(\mX)-\tilde g(\mX)\|_p^p\mathrm{d}\mX\right)^{\frac{1}{p}}<\frac{\varepsilon}{2}.
    \end{equation}
    
    The proof is completed by combining Eq.(\ref{eq:finish-1}) and (\ref{eq:finish-2}).
\end{proof}

\section{Proof of Proposition \ref{prop:ours}}
\label{app:proofs-ours}
\begin{proof}
    Recall that the Universal RPE-based Attention is defined as
    \begin{equation}
        \label{eq:ura-bias}
        \mA_{\mathrm{U}}(\mX)=\mathrm{softmax}\left(\mX \mW_Q(\mX \mW_K+\vone \vc_K^{\top})^{\top}+\mB\right) \odot \mC,
    \end{equation}
    where $\vone$ is an $n$-dimensional all-one vector and $\vone \vc_K^{\top}$ is the omitted bias term in Eq.(\ref{eq:ours}). 
    It suffices to show that $\mA_{\mathrm{U}}$ defined in Eq.(\ref{eq:ura-bias}) satisfies the two conditions in Theorem \ref{thm:universal}.
    
    \paragraph{Attentive condition.} Note that the all-one matrix $\vone \vone^{\top}$ is a Toeplitz matrix by definition. Therefore, we can set $\mW_Q=\mW_K=\vu$, $\vc_K=c$ and $\mC=\vone \vone^{\top}$, and obtain $\mA^h_U(\mX)=\mathrm{softmax}\left(\mX \vu (\mX\vu-c\vone)^{\top}\right)$.
    
    \paragraph{Position-aware condition.} First we set $\mC$ as a upper triangular Toeplitz matrix:
    \begin{equation}
        \mC=\begin{pmatrix}
        1  & 1 & \cdots & 1 \\
        0  & 1 & \cdots & 1 \\
        \vdots & \vdots &   &  \vdots \\
        0  & 0 & \cdots & 1
        \end{pmatrix}
    \end{equation}
    
    Then we set $\mW_Q=\mW_K=\vc_K=\vzero$. It's also easy to see that for any parametrization of $\mB$ descibed in Section \ref{sec:pre}, we can force $\mB=\vzero$ by properly setting all the learnable parameters to 0. In this case, for any $\mX\in\sR^{n\times d}$ we have 
    \begin{align}
        &\mathrm{softmax}\left(\mX \mW_Q^h (\mX \mW_K^h+\vone \vc_K^{\top})^{\top}+\mB\right)
        =\frac{1}{n}\vone \vone^{\top};\\
        & \Rightarrow\quad \mA^h_U(\mX)\vone = \frac{1}{n}\mC\vone = \begin{pmatrix}
        1 & \frac{n-1}{n} & \cdots & \frac{1}{n}
        \end{pmatrix}^{\top}.
    \end{align}

    To sum up, $\mA_{\mathrm{U}}$ defined in Eq.(\ref{eq:ura-bias}) satisfies the two conditions in Theorem \ref{thm:universal}, which completes the proof.
\end{proof}

\newpage
\section{Experiments}

\subsection{Synthetic Tasks}
\begin{figure}[ht]
    \centering
    \includegraphics[width=1\linewidth]{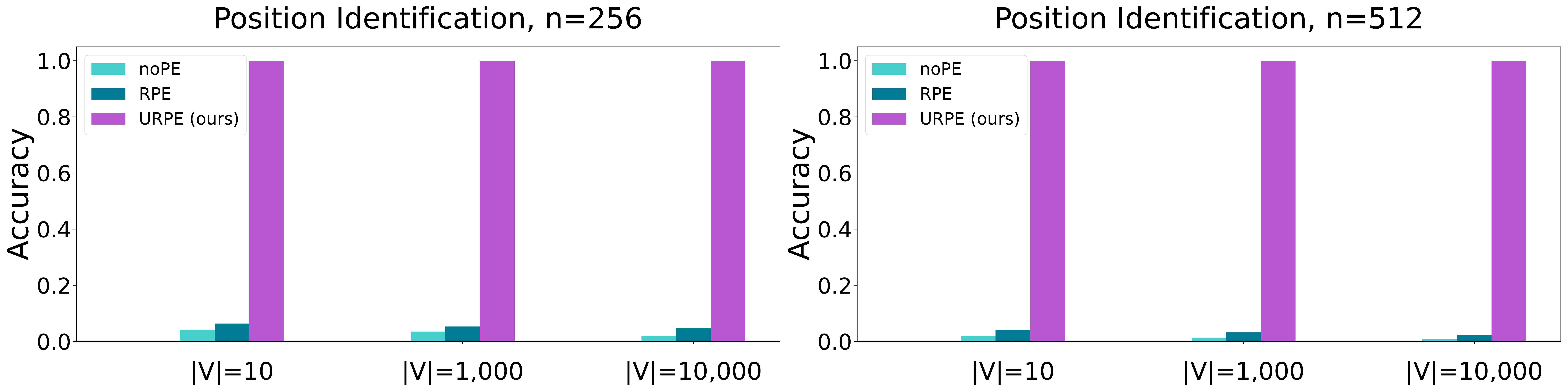}
    \caption{Results on the Position Identification task with input sequences with different lengths: (1) Sequence length $n=256$ (Left Panel); (2) Sequence length $n=512$ (Right Panel). With input sequences with different lengths, our URPE-based Transformer model consistently solves the task while other methods fail.}
    \label{fig:syn-task-supp}
\end{figure}
\textbf{Baselines.}$\enspace\ $ We choose the vanilla Transformer as the base model and compare the following ways to encode positional information: (1) no positional encodings (noPE); (2) T5-style relative positional encoding (RPE) ~\cite{raffel2019exploring}; (3) URPE with T5-style RPE backbone Transformer.

\textbf{Settings.}$\enspace\ $ For both the Position Identification (PI) and Even Token Prediction (ETP) tasks mentioned in Section 5.1, we use synthetic datasets with randomly generated sequences. In detail, we vary the token vocabulary size from [10, 1000, 10000] and set the sequence length to 128. The number of layers and the number of attention heads are set to 3 and 12, respectively. The hidden dimension is set to 768. We use Adam~\cite{kingma2015adam} as the optimizer, and set its hyperparameters $\epsilon$ to $1e-8$ and $(\beta_1,\beta_2)$ to $(0.9,0.999)$. The peak learning rate is set to $7e-5$ with a 6K-step warm-up stage. After the warm-up stage, the learning rate decays linearly to zero. The model is trained for 40K steps in total with the batch size as 512. We set the dropout probability, gradient clip norm and weight decay to 0.0. All models are trained on 4 NVIDIA Tesla V100 GPUs.

\textbf{Ablation Study on the length of input sequences.}$\enspace\ $ To comprehensively investigate the approximation capability of RPE-based Transformer and our URPE-based Transformer, we vary the sequence length in [256, 512] on the Position Identification task. The results are presented in Figure~\ref{fig:syn-task-supp}. Together with the results in Figure~\ref{fig:syn-task}, we can see that our URPE-based Transformer consistently achieves 100\% accuracy, while the Transformer without PE and the Transformer with T5-style RPE perform worse when the task becomes more difficult (i.e., the input sequences become longer). The above results serve as further evidences that our URPE-based Transformer is consistent with our theoretical findings in Section~\ref{sec:universal}.

\subsection{Language Modeling}
\textbf{Baselines.}$\enspace\ $ We choose the Transformer-XL as the base model and compare the following ways to encode positional information: (1) Transformer-XL style relative positional encoding ~\cite{dai2019transformer}; (2) URPE with Transformer-XL style RPE backbone model. Besides, we also include results of several competitive sequence models: (1) LSTM~\cite{grave2016improving}; (2) Temporal Convolutional Network (TCN)~\cite{bai2018empirical}; (3) Gated Convolutional Neural Network (GCNN)~\cite{dauphin2017language}; (4) LSTM with Neural Cache~\cite{grave2016improving}; (5) Quasi-Recurrent Neural Network~\cite{merity2018analysis}; (6) Hebbian Learning with Neural Cache~\cite{rae2018fast}.

\textbf{Settings.}$\enspace\ $ We conduct experiments on the WikiText-103 dataset~\cite{merity2016pointer}, which contains 103M training tokens from 28K articles, with an average length of 3.6K tokens per article, which allows testing the ability of long-term dependency modeling. We build our model based on the Transformer-XL Base model which consists of 16 decoder layers. The number of attention head is set to 10. The hidden dimension is set to 41. The dimension of feed-forward layer is set to 2100. The dropout ratio and the weight decay are set to 0.1 and 0.01, respectively. We use Adam \cite{kingma2015adam} as the optimizer, and set its hyperparameters $\epsilon$ to $1\mathrm{e}-8$ and $(\beta_1,\beta_2)$ to (0.9, 0.999). The peak learning rate is set to $2.5\mathrm{e}-4$. The total training steps is set to 200K. All models are trained on 4 NVIDIA Tesla V100 GPUs.

\subsection{Graph Learning}
We conduct experiments on three benchmark datasets covering graph and node representation learning tasks from small to large scale datasets: ZINC from Benchmarking-GNNs~\cite{dwivedi2020benchmarking}, PCQM4M and MAG24M from Open Graph Benchmark Large-Scale Challenge (OGB-LSC)~\cite{hu2021ogb}.
\subsubsection{ZINC}
ZINC is a real-world dataset which consists of 250K molecular graphs. The task is to predict the constrained solubility of a molecule which is an important chemical property for drug discovery. We train our models on both the ZINC-Full and ZINC-Subset (12K selected graphs following~\cite{dwivedi2020benchmarking}).

\textbf{Baselines.}$\enspace\ $ To demonstrate the power of our method and for fair comparison, we set the parameter budget of the model to be less than 500K following~\cite{dwivedi2020benchmarking,ying2021transformers}. We include results of several competitive GNNs: (1) Graph Isomorphism Network (GIN)~\cite{xu2018how}; (2) GraphSAGE~\cite{hamilton2017inductive}; (3) Graph Attention Network (GAT)~\cite{velivckovic2018graph}; (4) Graph Convolutional Network~\cite{kipf2016semi}; (5) Mixture Model Network (MoNet)~\cite{monti2017geometric}; (6) Gated Graph Convolutional Network~\cite{bresson2017residual} with Positional Encodings~\cite{dwivedi2020benchmarking} (GatedGCN-PE); (7) Message Passing Neural Network (MPNN(sum))~\cite{gilmer2017neural}; (8) Hierarchical Inter Message Passing (HIMP)~\cite{fey2020hierarchical}; (9) Principal Neighbourhood Aggregation (PNA)~\cite{corso2020principal}. Two representative Transformer-based models GraphTransformer (GT)~\cite{dwivedi2021generalization} and Spectral Attention Network (SAN)~\cite{kreuzer2021rethinking} are also compared.

\textbf{Settings.}$\enspace\ $ We build on the Graphormer~\cite{ying2021transformers} model which consists of 12 layers. The dimension of hidden layers and feed-forward layers are set to 80. The number of attention heads are set to 32. The batch size is selected from [128, 256, 512]. We use Adam \cite{kingma2015adam} as the optimizer, and set its hyperparameter $\epsilon$ to $1\mathrm{e}-8$ and $(\beta_1,\beta_2)$ to (0.9, 0.999). The peak learning rate is selected from [$4\mathrm{e}-4$, $5\mathrm{e}-4$]. The model is trained for 600k and 800k steps with a 60K-step warm-up stage for ZINC-Subset and ZINC-Full respectively. After the warm-up stage, the learning rate decays linearly to zero. The dropout ratio is selected from [0.0, 0.1]. The weight decay is selected from [0.0, 0.01]. All models are trained on 4 NVIDIA Tesla V100 GPUs.

\subsubsection{PCQM4M}
PCQM4M is a quantum chemistry regression task in OGB-LSC~\cite{hu2021ogb}. The PCQM4M dataset contains more than 3.8 million molecular graphs in total, which is currently the largest graph-level prediction dataset. The state-of-the-art architecture for this task is the Graphormer~\cite{ying2021transformers} model introduced above.

\textbf{Baselines.}$\enspace\ $ Following~\cite{ying2021transformers}, we include results of several competitive baselines: (1) Graph Convolutional Network~\cite{kipf2016semi}; (2) Graph Isomorphism Network (GIN)~\cite{xu2018how}; (3) Graph Convolutional Network~\cite{kipf2016semi} with Virtual Node~\cite{gilmer2017neural}; (4) Graph Isomorphism Network~\cite{xu2018how} with Virtual Node~\cite{gilmer2017neural}; (5) Graph Isomorphism Network with Edge feature and Virtual Node (GINE-VN)~\cite{brossard2020graph,gilmer2017neural}; (6) Deeper Graph Convolutional Network (DeeperGCN)~\cite{li2020deepergcn}; (7) GraphTransformer~\cite{dwivedi2021generalization}.

\textbf{Settings.}$\enspace\ $ To demonstrate the power of our method and for fair comparison, We set the parameter budget of the model to be less than 12.5M. We still follow~\cite{ying2021transformers} to set the hyper-parameters in the Graphromer model and equip it with URPE. In detail, our Graphormer with URPE-based Attention consists of 6 layers and 32 attention heads. The dimension of hidden layers and feed-forward layers are set to 512. The batch size is 512. We use Adam~\cite{kingma2015adam} as the optimizer, and set its hyperparameters $\epsilon$ to $1\mathrm{e}-8$ and $(\beta_1,\beta_2)$ to (0.9, 0.999). The peak learning rate is set to $3\mathrm{e}-4$. The model is trained for 1M steps with a 60k-step warm-up stage. After the warm-up stage, the learning rate decays linearly to zero. The dropout ratio for the input embedding, attention matrix and the hidden representation are set to 0.0, 0.1, 0.0 respectively. The weight decay is set to 0.0. All models are trained on 8 NVIDIA Tesla V100 GPUs.

\subsubsection{MAG240M}
\begin{table}[t]
\caption{Results on MAG240M from OGB-LSC. We use $^*$ to indicate the best performance. The results of the baselines are reported in~\cite{hu2021ogb}.}
  \label{tab:mag}
  \vspace{4px}
  \centering
  \resizebox{0.59\textwidth}{!}{ \renewcommand{\arraystretch}{1.25}
    \begin{tabular}{lll}
    \toprule
    Model & \#Params & Valid Acc \\
    \midrule
    MLP~\cite{hu2021ogb} & 0.5M & 0.5267 \\
    LabelProp~\cite{hu2021ogb} & 0 & 0.5844 \\
    SGC~\cite{wu2019simplifying} & 0.7M & 0.6582 \\
    SIGN~\cite{rossi2020sign} & 3.8M & 0.6664 \\
    MLP$+$C\&S~\cite{huang2020combining} & 0.5M & 0.6698 \\
    GraphSAGE~\cite{hamilton2017inductive} & 4.9M & 0.6679 \\
    GAT~\cite{velivckovic2018graph} & 4.9M & 0.6715 \\
    \midrule
    R-GraphSAGE~\cite{hamilton2017inductive,schlichtkrull2018modeling} & 12.2M & 0.6986 \\
    R-GAT~\cite{velivckovic2018graph,schlichtkrull2018modeling} & 12.3M & 0.7002 \\
    \midrule
    Graphormer~\cite{ying2021transformers} & 11.0M & 0.7013 \\
    Graphormer + URPE-based Attention (ours) & 11.0M & 0.7074$^{*}$ \\
    \bottomrule
    \end{tabular}}
\end{table}

The MAG240M dataset contains a large-scale heterogeneous academic graph with more than 240 million nodes and 1.7 billion edges. This giant graph is extracted from the Microsoft Academic Graph (MAG)~\cite{wang2020microsoft}. Given arXiv papers situated in the heterogeneous graph, the task requires the model to automatically annotate the topics of those papers, i.e., predicting the primary subject area of each arXiv paper. Thus, it can be formulated as a node representation learning task. 

\paragraph{Baselines.} We include results of several competitive baselines: (1) graph-agnostic MLP (MLP)~\cite{hu2021ogb}; (2) Label Propagation~\cite{hu2021ogb}; (3) Simplified Graph Convolution~\cite{wu2019simplifying}; (4) Scalable Inception Graph Neural Network (SIGN)~\cite{rossi2020sign}; (5) MLP with Correct and Smooth Procedure (MLP+C$\&$S)~\cite{huang2020combining}; (6) GraphSAGE~\cite{hamilton2017inductive}; (7) Graph Attention Network~\cite{velivckovic2018graph}. Due to the heterogeneous property of the academic graph, we also choose two baselines from~\cite{hu2021ogb} which learn distinct weights for each individual relational type: R-GraphSAGE~\cite{hamilton2017inductive,schlichtkrull2018modeling} and R-GAT~\cite{velivckovic2018graph,schlichtkrull2018modeling}.

\paragraph{Settings.} To demonstrate the power of our method and for fair comparison, We set the parameter budget of the model to be less than 12.5M. We build on the Graphormer~\cite{ying2021transformers} model which consists of 6 layers. The dimension of hidden layers and feed-forward layers are set to 512. The number of attention heads are set to 32. The batch size is 1024. We use Adam~\cite{kingma2015adam} as the optimizer, and set its hyperparameter $\epsilon$ to $1\mathrm{e}-8$ and $(\beta_1,\beta_2)$ to (0.9, 0.999). The peak learning rate is set to $3\mathrm{e}-4$. The model is trained for 100k steps with a 6k-step warm-up stage. After the warm-up stage, the learning rate decays linearly to zero. The dropout ratio and the weight decay is set to 0.5 and 0.01 respectively. We also employ the stochastic depth~\cite{huang2016deep} and set the probability to 0.1. We follow the sub-graph sampling strategy from \cite{hu2021ogb}. All models are trained on 32 NVIDIA Tesla V100 GPUs.

\paragraph{Results} Table \ref{tab:mag} summarizes performance comparisons on MAG240M dataset. It can also be easily seen that the URPE-based Attention consistently improve the performance of the Graphormer model on the MAG240M node classification task. The validation accuracy of the Graphormer model is improved by 0.06, which is currently the state-of-the-art single model performance on the leaderboard of MAG240M dataset.

\subsection{URPE-based Transformers v.s. Transformers with both APE and RPE}
We conduct experiments on the Language Pre-training task to evaluate different positional encoding strategies. It is worth noting that in some competitive pre-training methods like UniLMv2~\cite{bao2020unilmv2}, APE and RPE have already been used together. Thus, we choose this task as the benchmark. We mainly test three model variants: Transformers with both APE and RPE, Transformers with RPE only, and our URPE-based Transformers. For all the models, RPE is set to the T5~\cite{raffel2019exploring} version, following UniLMv2. We roughly keep the number of parameters of different models to the same and train the models in the BERT-base setting using the same hyper-parameters. We choose the validation loss (masked language modeling loss after 1M iterations on a hold-out validation set) in the pre-training stage as the metric. We observed that the validation losses of the Transformers with both APE and RPE, Transformers with RPE only, our URPE-based Transformers are 1.86, 1.94, and 1.87, respectively. The results show that our URPE Transformer is competitive with Transformers with both APE and RPE and is much better than Transformers with RPE only. Together with the above observations on the synthetic dataset in Section~\ref{Exp-Syn}, we can see that URPE is competitive or even superior to previous APE and RPE or their combinations.

\end{document}